\newcommand{\dx}{\,\mathrm{d}}
\DeclareMathOperator{\R}{{\mathbb{R}}}
\renewcommand{\vec}[1]{\mathbf{#1}}
\newcommand{\X}{\mathbb{X}}
\renewcommand{\S}{\ensuremath{\mathbb{S}}}
\DeclareMathOperator*{\argmin}{arg\,min} 
\DeclareMathOperator*{\supp}{supp} 
\DeclareMathOperator{\TV}{TV}
\DeclareMathOperator{\KL}{KL}
\renewcommand{\vec}[1]{\mathbf{#1}}
\newcommand{\N}{\mathbb{N}}
\newcommand{\Id}{\mathrm{Id}}
\newcommand{\Hyp}{\mathrm{Net}_\phi(\R^d)}
\DeclareMathOperator{\Lip}{Lip}
\newtheorem{theorem}{Theorem}[section]
\newtheorem{lemma}[theorem]{Lemma}
\newtheorem{remark}[theorem]{Remark}
\newtheorem{proposition}[theorem]{Proposition}
\title{On the Effect of Initialization: The Scaling Path of 2-Layer Neural Networks}
\date{\today}
\author{Sebastian Neumayer, Lénaïc Chizat, Michael Unser}
\begin{document}

\maketitle
\begin{abstract}
	In supervised learning, the regularization path is sometimes used as a convenient theoretical proxy for the optimization path of gradient descent initialized from zero.
	In this paper, we study a modification of the regularization path for infinite-width 2-layer ReLU neural networks with nonzero initial distribution of the weights at different scales.
	By exploiting a link with unbalanced optimal-transport theory, we show that, despite the non-convexity of the 2-layer network training, this problem admits an infinite-dimensional convex counterpart.
	We formulate the corresponding functional-optimization problem and investigate its main properties.
	In particular, we show that, as the scale of the initialization ranges between $0$ and $+\infty$, the associated path interpolates continuously between the so-called kernel and rich regimes.
	Numerical experiments confirm that, in our setting, the scaling path and the final states of the optimization path behave similarly, even beyond these extreme points.
\end{abstract}

\section{Introduction}\label{sec:Intro}

The mathematical theory of artificial neural networks (NNs) can be tackled from either a  dynamic or a static viewpoint\footnote{\href{https://mjt.cs.illinois.edu/dlt/}{Matus Telgarsky, Deep learning theory lecture notes: https://mjt.cs.illinois.edu/dlt/}}.
In the dynamic approach, one considers a NN in combination with a training algorithm.
Then, one studies the statistical properties of the NN along, and at the end, of the training cycle.
In the static approach, one studies NNs as a statistical hypothesis (or candidate) space, independently of any training routine.
This space is typically endowed with a norm (or, more generally, a metric) in parameter space, which acts as a regularizer (measure of complexity).
Both approaches address distinct aspects.
The dynamic approach studies the objects that are the most relevant to practice, but faces the difficulty that those are less tractable theoretically.
Thus, much fewer statistical results are known compared to static approaches.

Let us consider a parametric model $\phi\colon\mathbb{R}^p\to \mathcal{F}$, where $\mathbb{R}^p$ is the space of parameters and $\mathcal{F}$ a space of functions, and let $\mathcal L\colon\mathcal{F}\to \mathbb{R}$ be an objective function such as the empirical risk.
In the dynamic approach, a convenient object of study is the \emph{optimization path} that results from a gradient flow.
This path $(\boldsymbol \theta_t)_{t\geq 0}$ starts from a given initialization $\boldsymbol\theta_0\in \mathbb{R}^p$ and solves
\begin{align}\label{eq:GF}
	\frac{\dx}{\dx t} \boldsymbol \theta_t = -\nabla \mathcal L(\phi(\boldsymbol \theta_t)),
\end{align}
as well as the associated path $ \phi(\boldsymbol \theta_t)$ in function space.
Many refinements are of course possible to make the model more realistic such as taking into account stochasticity~\cite{li2019stochastic, pesme2021implicit}, large stepsizes~\cite{wang2022large}, or momentum~\cite{su2014differential}.
As for static analyses, they often focus on the constrained path $\boldsymbol \theta^*_\delta = \argmin_{\Vert \boldsymbol \theta\Vert\leq \delta} \mathcal L(\phi(\boldsymbol \theta))$ or the \emph{regularization path}
\begin{align}\label{eq:regularization-path}
	\boldsymbol \theta^*_\lambda = \argmin_{\boldsymbol \theta} \mathcal L(\phi(\boldsymbol \theta)) + \lambda \Vert \boldsymbol \theta\Vert_2^2.
\end{align}
In the simple context of linear parameterizations, formalized as $\phi(\boldsymbol \theta)=\boldsymbol \theta^\top \Phi$ for some $\Phi\in\mathcal{F}^p$ with the initial parameter $\boldsymbol \theta_0=\mathbf{0}$ for~\eqref{eq:GF}, the two approaches are tightly interconnected.
More precisely, one creates a close link between the optimization (dynamic) and the regularization (static) paths~\cite{suggala2018connecting, ali2020implicit} by letting the tuning parameter take the form $\lambda=1/(2t)$.

\paragraph{Scaling Path} It is perhaps too optimistic to expect such a tight connection for nonlinear NNs.
Indeed, this connection breaks, for example, in the cases studied in \cite{razin2020implicit, woodworth2020kernel}.
Still, if the regularization path was to preserve some of the key characteristics of optimization paths (such as certain asymptotic behaviors) this would make the static approach relevant to a better understanding of practical NNs.

For the rest of this work, we notate $\mathcal L$ as the empirical risk associated with samples $(\vec x_k,y_k) \in \R^d \times \R$, $k=1,\ldots,n$ and some loss function $L\colon \R^d \times \R \to [0,+\infty]$.
To include the case of arbitrarily wide NNs, we replace the parameter space $\R^p$ with $\mathcal{P}_2 (\R^p)$.
Accordingly, we shall denote the parametrization function of our regression problem by $\phi[\mu] = \int_{\R^p} \phi(\vec w) \dx \mu(\vec w)$.

A serious obstacle to the establishment of a link between~\eqref{eq:GF} and~\eqref{eq:regularization-path} in the case of NNs is that it is inconvenient to initialize the optimization from $\mu_0=\delta_0$ since this is often a stationary point of~\eqref{eq:GF}.
As a remedy, one may instead initialize from the uniform distribution $\mu_0$ on the sphere with radius $\alpha$.
Hence, we study a modification of the regularization path~\eqref{eq:regularization-path} that takes this nonzero initialization and the generalization to measures into account.
More precisely, for fixed $\mu_0$ and $\lambda>0$, we define the \emph{scaling path} with scale $\alpha$ as 
\begin{align}\label{eq:ProxPath}
	\mu^*_{\alpha} = \argmin_{\mu \in \mathcal{P}_2 (\R^p)} \sum_{k=1}^n L\bigl(\phi[\mu](\vec x_k), y_k \bigr) +\lambda R_\alpha(\mu, \mu_0),
\end{align}
where the functional $R_\alpha(\mu, \mu_0) = (1+ \alpha^2) W_2(\mu, {S_\alpha}_\#\mu_0)^2$ with $S_\alpha(\vec w) = \alpha \vec w$ acts as our scale-dependent regularizer.
A first important observation is that 2-layer ReLU NNs are covered within our framework if we choose
\begin{equation}
	\phi_{\mathrm{ReLU}}(\vec w, \vec x) = w_1 (\vec w_2^T \vec x)^+, \qquad \vec w = (w_1, \vec w_2) \in \R \times \R^{d}.
\end{equation}
Further, the problem in \eqref{eq:ProxPath} is convex, which allows the use of standard optimization tools.
In the context of 2-layer ReLU NNs with vanishing initialization scale $\alpha$, we shall see in Section~\ref{sec:Interp} that \eqref{eq:ProxPath} reduces (up to rescaling of $\lambda$) to the regularization path \eqref{eq:regularization-path}.

For our analysis, it is actually more convenient to study \eqref{eq:ProxPath} from a functional perspective by considering the resulting function $f = \phi[\mu]\colon \R^d \to \R$.
As different $\mu$ can lead to the same network $f$, the regularizer $R_\alpha(\cdot,\mu_0)$ needs to be replaced by a complexity measure $N_\alpha(f,\mu_0)$ that takes the whole equivalence class into account.
Essentially, $N_\alpha(f,\mu_0)$ describes the distance of a particular NN $f$ to some initialization $\mu_0$.
With this notation, the scaling path \eqref{eq:ProxPath} now takes the form
\begin{equation}
	\argmin_{f \in \Hyp} \sum_{k=1}^n L\bigl(f(\vec x_k),y_k\bigr) + \lambda N_\alpha(f,\mu_0),
\end{equation}
where $\Hyp$ is a suitable space of functions.
Our study of $N_\alpha(f,\mu_0)$ will reveal an interesting link between \eqref{eq:ProxPath} and the theory of unbalanced optimal transport.
Based on this link, we can prove our main result, namely, that the scaling path is $\Gamma$-convergent in some suitably chosen metric space.
In particular, this implies that the family of minimizing networks $f^*_{\alpha} \in C(\R^d)$ depends continuously on $\alpha$.
For the limiting cases of vanishing and infinite scales $\alpha$ in~\eqref{eq:ProxPath}, we get convergence to two well-known settings from the literature, which are discussed in the next paragraph.

\paragraph{Limits of the Scaling Path}
As the scale $\alpha$ in~\eqref{eq:ProxPath} vanishes (${S_\alpha}_\#\mu_0 \to \delta_0$), our problem turns into $\ell^2$-weight regularization with
\begin{equation}\label{eq:Rich}
	\argmin_{\mu \in \mathcal{P}_2 (\R^p)} \sum_{k=1}^n L\bigl(\phi[\mu](\vec x_k), y_k \bigr) + \lambda\int_{\R^p} \Vert \vec w \Vert^2 \dx  \mu(\vec w).
\end{equation}
This problem was thoroughly investigated \cite{Bach2017,Ongie2020b,NeuUns2022}, and is known to admit sparse solutions; namely, minimizers $\hat \mu$ that consist of few atoms $\delta_{\vec w_k}$.
In the case of ReLU networks with $\phi_{\mathrm{ReLU}}$, these correspond to functions of the form
$f_\mu = \sum_{k=1}^n w_{k,1} (\vec w_{k,2}^T \vec x)^+$.
Further, \eqref{eq:Rich} leads to predictors with strong statistical properties, such as good adaptivity to anisotropic target functions.
Following~\cite{woodworth2020kernel}, we refer to this formulation as the ``rich regime''.
This formulation is known to capture end-of-training behavior of the gradient flow of 2-layer NNs in certain contexts, such as with the logistic loss~\cite{CB2020, lyu2021gradient} or with a small initialization~\cite{boursier2022gradient}.

In contexts such as large initialization with square loss, the training of NNs behaves instead according to the neural tangent kernel (NTK) theory~\cite{JacGab2018, arora2019fine, BieMai2019}.
There, the kernel in general form is given by
\begin{align}
	K(\vec x, \vec x^\prime) =& \int_{\R^p}\bigl(\nabla_{\vec w} \phi(\vec w, \vec x)\bigr)^T \nabla_{\vec w} \phi(\vec w, \vec x^\prime) \dx  \mu_0(\vec w)
\end{align}
and depends on the initial distribution $\mu_0 \in \mathcal P_2(\R^{p})$.
In this \emph{kernel} (a.k.a.~\emph{lazy}) regime, the gradient flow in the large-time limit solves the associated kernel-ridge regression problem
\begin{equation}\label{eq:NTK}
	\argmin_{f \in \mathcal H_K} \sum_{k=1}^n L(f(\vec x_k),y_k) + \lambda \Vert f \Vert_{\mathcal H_K}^2,
\end{equation}
which we identify as the limit of \eqref{eq:ProxPath} as $\alpha \to \infty$.
Note that the solution of this problem can be written in the form $\hat f(\vec x) = \sum_{k=1}^n K(\vec x,\vec x_k)c_i$ with $c_i \in \R$.
Equivalently, we can also investigate the problem
\begin{equation}\label{eq:neural_tangent_kernel}
	\argmin_{T \in L^2(\R^{p}, \mu_0)} \sum_{k=1}^n L\bigl(\bigl\langle T, \nabla_{\vec w} \phi(\cdot, \vec x_k) \bigr\rangle_{L^2(\R^{p}, \mu_0)},y_k\bigr) + \lambda \Vert T \Vert_{L^2(\R^{p},\mu_0)}^2
\end{equation}
associated to the corresponding feature map (see \cite{Berlinet2004} for details), which leads to the same solution in function space.
The underlying feature map $\nabla_{\vec w} \phi(\cdot, \vec x)$ is related to the Taylor expansion of the NN parameterization function $\phi$ around the initial parameters.

\paragraph{Outline}
To study the scaling path~\eqref{eq:ProxPath}, we introduce and analyze the complexity measure $N(f,\mu_0)$ in Section~\ref{secInfWideNetworks}.
Based on the developed theory for $N(f,\mu_0)$, we investigate in Section~\ref{sec:Interp} the associated family of functional-optimization problems \eqref{eq:ProxPath}.
As our main result, we prove that the underlying family of functionals is $\Gamma$-convergent and that the rich regime~\eqref{eq:Rich} and the kernel regime~\eqref{eq:NTK} are the limits for $\alpha \to 0$ and $\alpha \to \infty$, respectively.
Our theoretical results are illustrated in Section~\ref{Sec:NumExample} by a simulation in which we compare the scaling path and the final state of the optimization path for several scales $\alpha$.
Finally, we draw conclusions in Section~\ref{sec:Conclusions}.

\section{Infinite-Width Neural Networks}\label{secInfWideNetworks}
\begin{table}[t]
	\centering
	\footnotesize
	\begin{tabular}{l|l}
		$\mathcal{P}_2 (\R^p)$ & probability measures with finite second moments (with 2-Wasserstein metric $W_2$)\\
		$\mathcal M^+(\S^{p-1})$ & positive measures on the sphere $\S^{p-1}$\\
		$\mathcal H_K$ & reproducing kernel Hilbert space corresponding to $K$\\
		$C(\X)$ & continuous functions on metric space $\X$ (equipped with supremum norm)\\
		$L^2(\R^d, \mu_0)$ & square integrable functions with norm weighted by $\mu_0$\\ 
		$W_{\text{loc}}^{1,\infty}(\R^p)$ & weakly differentiable functions with finite $W^{1,\infty}$-norm on compact sets\\ 
	\end{tabular}
	\caption{Function and measure spaces.}
	\label{tab:notation}
\end{table}
The function and measure spaces used throughout this manuscript are briefly introduced in Table~\ref{tab:notation}.
In abstract form, we can parameterize infinite-width NNs using probability measures with finite second-order moments $\mu \in \mathcal{P}_2 (\R^p)$, where $\R^p$ is the parameter space, and a function $\phi \colon \R^p \times \R^d \to \R$ that, in our case, satisfies the following properties.
\begin{itemize}
	\item \emph{2-homogeneity in $w$}: $\phi(r\vec w, \vec x) = r^2\phi(\vec w, \vec x)$ for all $(r,\vec w,\vec x) \in {\R^+} \times \R^px \times \R^d$.
	\item \emph{Regularity in $w$}: for every $\vec x \in \R^d$, $\phi(\cdot,\vec x) \in W_{\text{loc}}^{1,\infty}(\R^p)$ is twice continuously differentiable on an open cone $C_{\vec x} \subset \R^p$ with full Lebesgue measure $\lambda$, so that $\lambda(\R^p \setminus C_{\vec x})=0$, and $\Vert \nabla_{\vec w} ^2 \phi(\cdot,\vec x)\Vert$ is uniformly bounded on $C_{\vec x}$.
	\item \emph{Lipschitz regularity in $\vec x$}: For every $\vec w \in \R^p$, $\vec x \mapsto \phi(\vec w,\vec x)$ is Lipschitz-continuous, and there exists a constant $C>0$ such that $\Lip(\phi(\vec w,\cdot)) \leq C\Vert\vec  w \Vert^2$.
\end{itemize}
Some of these conditions are similar to those of \cite{CB2020}.
The first assumption implies that $\int_{\R^p} \phi(\vec w,\vec x) \dx \mu(\vec w) < \infty$ for all $\mu \in \mathcal{P}_2 (\R^p)$ and $\vec x \in \R^d$.
The first two assumptions together imply that $\nabla_{\vec w}  \phi(\cdot,\vec x) $ is (positively) 1-homogeneous in its first variable on $C_{\vec x}$, in the sense that $\nabla_{\vec w} \phi(r\vec w, \vec x) = r\nabla_{\vec w} \phi(\vec w, \vec x)$ for all $(r,\vec w) \in {\R^+} \times C_{\vec x}$.
Hence, $\phi(\cdot,\vec x) \in W_{\text{loc}}^{1,\infty}(\R^p)$ also implies that $\nabla_{\vec w}  \phi(\cdot,\vec x) \in L^2(\R^p,\mu)$ for all $\mu \in \mathcal{P}_2 (\R^p)$ and $\vec x \in \R^d$.
Using the function $\phi$, we define an associated space of infinite width NNs as
\begin{equation}
	\Hyp = \biggl\{\int_{\R^p} \phi(\vec w,\cdot) \dx \mu(\vec w): \mu \in \mathcal{P}_2 (\R^p)\biggr\}.
\end{equation}
Note that the parameterization $\mu$ of a NN $f \in \Hyp$ is not necessarily unique.
\begin{remark}\label{rem:ReLU}
	One can readily verify that $2$-layer ReLU NNs with parameterization function $\phi_{\mathrm{ReLU}}(\vec w, \vec x) = w_1 (\vec w_2^T \vec x)^+$, $\vec w = (w_1, \vec w_2) \in \R^{d+1}$, fit into this abstract framework.
	Here, $w_1$ parameterizes the scalar output weights and  $\vec w_2$ parameterizes the hidden layer.
	To allow for bias vectors, we can pad the input vector $\vec x$ with a $1$ at the end and treat the biases as part of the weights $\vec w$.
	Given an atomic measure $\mu = \sum_{k=1}^n \delta_{\vec w_k}$, the associated finite width NN reads $f_\mu = \sum_{k=1}^n w_{k,1} (\vec w_{k,2}^T \vec x)^+$.
	Finally, using the characteristic function $\chi_{\R^+}$ of the positive reals, the NTK of a 2-layer ReLU NN with initialization $\mu_0$ is given by
	\begin{equation}
		K_{\mathrm{ReLU}}(\vec x, \vec x^\prime)=\int_{\R^{d+1}} \Big((\vec w_2^T \vec x)^+ (\vec w_2^T \vec x^\prime)^+ +  w_1^2 \vec x^T \vec x^\prime \chi_{\R^+}(\vec w_2^T \vec x) \chi_{\R^+}(\vec w_2^T \vec x^\prime)\Big) \dx \mu_0(\vec w).
	\end{equation}
\end{remark}
Although $2$-layer ReLU NNs are the most relevant choice of $\phi$ from a practical viewpoint, we prefer to carry out our analysis for this general class of functions $\phi$.
For any $\vec x,\vec y \in \R^d$ and $f = \int_{\R^p} \phi(\vec w,\cdot) \dx \mu(\vec w) \in \Hyp$, it holds that
\begin{equation}
	\vert f(\vec x) - f(\vec y) \vert \leq C \int_{\R^p} \Vert \vec w \Vert^2 \dx \mu(\vec w) \Vert \vec x - \vec y\Vert.
\end{equation}
Hence, all functions in $\Hyp$ are Lipschitz-continuous.
Therefore, $\Hyp$ is a subset of $C(\R^d$).
In Section \ref{sec:ComMeasure}, we construct a complexity measure that encodes the distance of a given NN to a reference parameterization $\mu_0 \in \mathcal P_2(\R^{p})$, which could be, for example, the initialization of the NN before it is trained according to the gradient flow~\eqref{eq:GF}.

\subsection{Measure of Complexity for Neural Networks}\label{sec:ComMeasure}
In the following, we rely heavily on optimal transport and, in particular, on the 2-Wasserstein metric $W_2$ \cite{AGS2005,Villani2009}.
Let $\mu_0 \in \mathcal P_2(\R^{p})$ be a probability measure with polar disintegration $\mu_0(\!\dx r, \!\dx \boldsymbol \theta)=\mu_0(\!\dx r| \boldsymbol \theta) \hat \mu_0 (\!\dx \boldsymbol \theta)$, where  $\hat \mu_0 \in \mathcal M^+(\S^{p-1})$ and $\int_{{\R^+}} r^2 \mu_0(\!\dx r|\boldsymbol \theta) = 1$ for $\hat \mu_0$-a.e.\ $\boldsymbol \theta \in \S^{p-1}$.
We then define the \emph{complexity measure} $N(\cdot,\mu_0) \colon \Hyp \to {\R^+}$ as
\begin{align}
	N(f,\mu_0) &= \inf_{\mu \in \mathcal{P}_2 (\R^p)} \biggl\{W_2^2(\mu,\mu_0): f = \int_{\R^p} \phi(\vec w,\cdot) \dx \mu(\vec w) \biggr\}\label{eq:N_relax}.
\end{align}
Loosely speaking, $N(f,\mu_0)$ encodes by how much the parameter $\mu$ needs to move away from a reference measure $\mu_0$ in order to realize the NN $f$.
Using the Monge formulation of optimal transport, we obtain the upper bound
\begin{equation}
	N(f,\mu_0) \leq \inf_{T \in L^2(\R^p, \mu_0)} \biggl\{ \Vert T - \Id \Vert_{L^2(\mu_0)}^2 :  f = \int_{\R^p} \phi(\vec w,\cdot) \dx [T_\#\mu_0](\vec w)\biggr\}\label{eq:N},
\end{equation}
where $T_\#\mu_0 = \mu_0(T^{-1}(\cdot))$ denotes the push-forward measure of $\mu$ under $T$.
Because optimal transport maps do not necessarily exist, the right-hand side of~\eqref{eq:N} is indeed an infimum rather than a minimum.
However, the equality of \eqref{eq:N_relax} and \eqref{eq:N} holds when $\mu_0$ is absolutely continuous with respect to the Lebesgue measure $\lambda$.
This relation turns out to be useful for the derivation of our main result in Section~\ref{sec:Interp}.
\begin{remark}
	Recently, the idea of using an optimal-transport-based complexity measure for NNs has also been pursued in \cite[Section 5]{CheVanBru2022}.
	In their simplest instance, where the underlying function space is isomorphic to $\R^d$ and $\sigma_2$ is the ReLU, they investigate the same NNs as we in Remark \ref{rem:ReLU}.
	Albeit closely related, their complexity measure $\gamma_2^\dagger$ differs from ours since transport plans $\pi$ are supported on $\R \times \R^d \times \R^{d}$ instead of $\R^{d+1} \times \R^{d+1}$ and the transport cost is only computed with respect to $\vec w_2 \in \R^{d}$.
	Based on this choice, they are able to derive Rademacher complexity bounds for $\gamma_2^\dagger$, which lead to a posteriori generalization error bounds for gradient-desecent-trained NNs depending on a notion of the length of the optimization path \eqref{eq:GF}.
	For our more specific 2-homogeneous setting, the aim is instead to study fine properties of the scaling path \eqref{eq:ProxPath} as $\alpha$ varies.
\end{remark}

\subsection{Properties of the Complexity Measure}
First, we show that the complexity measure $N$ satisfies a homogeneity property.
\begin{lemma}[Homogeneity]
	For all $f \in \Hyp$, $\alpha>0$, and $\mu_0 \in \mathcal{P}_2 (\R^p)$, it holds with $S_\alpha\colon \R^p \to \R^p$ given by $\vec w \mapsto \alpha \vec w$ that
	\begin{equation}
		\alpha^2 N(f,\mu_0) = N(\alpha^2 f, {S_\alpha}_\# \mu_0).
	\end{equation}
\end{lemma}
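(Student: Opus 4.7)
The plan is to exploit the interplay between the 2-homogeneity of $\phi$ in $\vec w$ and the behavior of $W_2$ under the dilation $S_\alpha$. The key bijection is the change of variables $\nu = {S_\alpha}_\# \mu$ on $\mathcal{P}_2(\R^p)$, which has inverse $\mu = {S_{1/\alpha}}_\# \nu$ (so it is indeed a bijection).

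First I would verify the representation translates correctly. If $f = \int_{\R^p} \phi(\vec w, \cdot) \dx \mu(\vec w)$, then by the change-of-variables formula and 2-homogeneity in $\vec w$,
\begin{equation*}
	\int_{\R^p} \phi(\vec w, \cdot) \dx ({S_\alpha}_\# \mu)(\vec w) = \int_{\R^p} \phi(\alpha \vec w, \cdot) \dx \mu(\vec w) = \alpha^2 \int_{\R^p} \phi(\vec w, \cdot) \dx \mu(\vec w) = \alpha^2 f.
\end{equation*}
Hence $\mu$ represents $f$ if and only if ${S_\alpha}_\# \mu$ represents $\alpha^2 f$. This step is entirely routine but pins down the correspondence between admissible representations in the two infima.

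Next I would establish the scaling identity $W_2^2({S_\alpha}_\# \mu, {S_\alpha}_\# \mu_0) = \alpha^2 W_2^2(\mu, \mu_0)$. Any coupling $\pi \in \Pi(\mu, \mu_0)$ gives a coupling $(S_\alpha \times S_\alpha)_\# \pi \in \Pi({S_\alpha}_\# \mu, {S_\alpha}_\# \mu_0)$ whose quadratic cost equals $\alpha^2 \int \Vert \vec w - \vec w' \Vert^2 \dx \pi$, and the inverse map $S_{1/\alpha}$ produces the matching bound in the reverse direction. Taking infima gives the $\alpha^2$-scaling.

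Combining the two observations, and using that $\mu \mapsto {S_\alpha}_\# \mu$ is a bijection on $\mathcal{P}_2(\R^p)$ preserving the representability constraint up to the factor $\alpha^2$ in $f$,
\begin{align*}
	\alpha^2 N(f, \mu_0) &= \inf_{\mu} \bigl\{ \alpha^2 W_2^2(\mu, \mu_0) : f = \textstyle\int \phi(\vec w, \cdot) \dx \mu(\vec w) \bigr\} \\
	&= \inf_{\mu} \bigl\{ W_2^2({S_\alpha}_\# \mu, {S_\alpha}_\# \mu_0) : \alpha^2 f = \textstyle\int \phi(\vec w, \cdot) \dx ({S_\alpha}_\# \mu)(\vec w) \bigr\} \\
	&= \inf_{\nu} \bigl\{ W_2^2(\nu, {S_\alpha}_\# \mu_0) : \alpha^2 f = \textstyle\int \phi(\vec w, \cdot) \dx \nu(\vec w) \bigr\} = N(\alpha^2 f, {S_\alpha}_\# \mu_0).
\end{align*}
There is no genuine obstacle here: the only point that deserves a word of care is the surjectivity of $\mu \mapsto {S_\alpha}_\# \mu$ needed to turn the infimum over $\mu$ into an infimum over all admissible $\nu$. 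Since $\alpha > 0$, this is immediate from the explicit inverse $S_{1/\alpha}$.
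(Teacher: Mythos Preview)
Your proof is correct and follows essentially the same approach as the paper: both use the 2-homogeneity of $\phi$ to identify $\int \phi(\vec w,\cdot)\dx[{S_\alpha}_\#\mu](\vec w)=\alpha^2 f$ and the scaling identity $W_2^2({S_\alpha}_\#\mu,{S_\alpha}_\#\mu_0)=\alpha^2 W_2^2(\mu,\mu_0)$, then conclude via the bijection $\mu\mapsto{S_\alpha}_\#\mu$. Your write-up is simply a more explicit version of the paper's two-line argument.
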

\begin{proof}
	Since $f = \int_{\R^p} \phi(\vec w,\cdot) \dx \mu(\vec w)$ and $\phi$  is 2-homogeneous, we have that
	\begin{equation}
		\int_{\R^p} \phi(\vec w,\cdot) \dx [{S_\alpha}_\# \mu](\vec w) = \int_{\R^p} \phi(\alpha \vec w,\cdot) \dx \mu(\vec w) = \alpha^2f.
	\end{equation}
	The result then follows from $W_2({S_\alpha}_\# \mu, {S_\alpha}_\# \mu_0)^2 = \alpha^2 W_2(\mu,\mu_0)^2$.
\end{proof}
The constraint in~\eqref{eq:N_relax} has a very specific structure.
Using the 2-homogeneous projection operator $\Pi_2 \colon \mathcal{P}_2 (\R^p) \to   \mathcal{M}^+ (\S^{p-1})$ characterized by
\begin{equation}
	\int_{\S^{p-1}}
	\varphi(\boldsymbol \theta) \dx[\Pi_2(\mu)]\boldsymbol (\theta) = \int_{\R^p} \Vert \vec w \Vert^2 \varphi(\vec w/\Vert \vec w \Vert) \dx \mu (\vec w)
\end{equation}
for any $\varphi \in C(\S^{p-1})$, we rewrite~\eqref{eq:N_relax} as
\begin{align}\label{eq:equiv_constr}
	f = \int_{\R^p} \phi(\vec w,\cdot) \dx \mu(\vec w) = \int_{\S^{p-1}} \phi(\boldsymbol \theta,\cdot) \dx [\Pi_2(\mu)](\boldsymbol \theta).
\end{align}
Based on $\Pi_2$, we are now in the position to introduce the distance \smash{$\widehat W_2$} on $\mathcal{M}^+(\mathbb{S}^{p-1})$ known as the  Hellinger--Kantorovich or the Wasserstein--Fisher--Rao distance~\cite{LMS2015,kondratyev2016new, chizat2018interpolating}. We consider the formulation introduced by~\cite{LMS2015}, which is given for $\hat \mu_1,\hat \mu_2 \in \mathcal M^+(\S^{p-1})$ by
\begin{align}
	\widehat{W}_2^2(\hat \mu_1,\hat \mu_2) & = \!\!\min_{\mu_1,\mu_2 \in \mathcal P_2(\R^p)}\bigl\{W_2^2(\mu_1,\mu_2): \Pi_2(\mu_1) = \hat \mu_1, \Pi_2(\mu_2) = \hat \mu_2\bigr\}\label{eq:HK-dist}\\
	& = \!\!\min_{\pi \in \mathcal H_\leq (\hat \mu_1,\hat \mu_2)} \int_{(\R^p)^2} \!\! \Vert \vec w_1 - \vec w_2 \Vert^2 \dx \pi(\vec w_1, \vec w_2) + \sum_{i=1}^2 \bigl(\hat \mu_i - \Pi_2(\pi_i)\bigr) (\S^{p-1})\label{eq:HK_relaxed_const}\\
	& = \!\!\min_{\pi \in  \mathcal M^+((\S^{p-1})^2)}\sum_{i=1}^2 \KL(\pi_i, \hat \mu_i) -2 \int_{(\S^{p-1})^2} \log((\boldsymbol \theta_1^T \boldsymbol \theta_2)^+) \dx \pi (\boldsymbol \theta_1, \boldsymbol \theta_2)\label{eq:calib_plan},
\end{align}
where $\mathcal H_\leq (\hat \mu_1,\hat \mu_2) = \{\pi \in \mathcal P_2((\R^p)^2): \Pi_2(\pi_i) \leq \hat \mu_i\}$, and $\pi_i$ denotes the respective marginal of the plan.
It holds that \smash{$\widehat{W}_2$} is a metric on $\mathcal M^+(\S^{p-1})$, which metrizes the weak convergence \cite[Thm.~3.6]{LMS2015}.
Further, $\mathcal M^+(\S^{p-1})$ equipped with this metric is complete, and bounded sets are relatively compact.
Finally, let us remark that
\begin{align}\label{eq:Marg_Mu0}
	\Pi_2\bigl(\mu_0(\!\dx r, \!\dx \boldsymbol \theta)\bigr)= \biggl(\int_{{\R^+}} r^2 \dx \mu_0(r|\boldsymbol \theta) \biggr)\hat \mu_0 (\!\dx \boldsymbol \theta) = \hat \mu_0 (\!\dx \boldsymbol \theta).
\end{align}
Based on these observations, we derive an equivalent formulation for $N(f,\mu_0)$ under the assumption that $\supp(\hat \mu_0) \subset (\S^{p-1})$ covers a sufficiently large part of the space.

\begin{proposition}[Compact-set formulation]\label{prop:CompSetForm}
	Let $\mu_0 \in \mathcal{P}_2 (\R^p)$ such that the corresponding $\hat \mu_0 \in \mathcal{M}^+ (\S^{p-1})$ satisfies
	\begin{equation}\label{eq:CondMu}
		\max_{\boldsymbol \theta_1 \in \S^{p-1}} \min_{\boldsymbol \theta_2 \in \supp(\hat \mu_0)} \mathrm{d}_{\S^{p-1}}(\boldsymbol \theta_1,\boldsymbol \theta_2)< \frac{\pi}{2}.
	\end{equation}
	Then, any $\hat \mu \in \mathcal{M}^+ (\S^{p-1})$ posseses a lift $\mu \in \mathcal P_2(\R^p)$ such that \smash{$\widehat{W}_2(\hat \mu,\hat \mu_0) = W_2(\mu, \mu_0)$}.
	Further, it holds for any $f \in \Hyp$ that
	\begin{equation}\label{eq:complex_simple}
		N(f,\mu_0) = \inf_{\hat \mu \in \mathcal{M}^+ (\S^{p-1})} \biggl\{ \widehat{W}_2^2(\hat \mu,\hat \mu_0): f = \int_{\S^{p-1}} \phi(\boldsymbol \theta,\cdot) \dx \hat \mu(\boldsymbol \theta)\biggr\}.
	\end{equation}
\end{proposition}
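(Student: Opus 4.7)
The plan is to prove the lift claim first and then derive the identity~\eqref{eq:complex_simple} from it together with~\eqref{eq:equiv_constr}. The easy half of the lift claim is immediate: plugging any $\mu \in \mathcal{P}_2(\R^p)$ with $\Pi_2(\mu) = \hat\mu$ together with $\mu_0$ into the minimum in~\eqref{eq:HK-dist} yields $\widehat{W}_2(\hat\mu,\hat\mu_0) \le W_2(\mu,\mu_0)$. The nontrivial direction is to construct an explicit lift $\mu$ that achieves equality \emph{with $\mu_0$ fixed} (rather than some other lift of $\hat\mu_0$ from~\eqref{eq:HK-dist}); this is exactly where~\eqref{eq:CondMu} is needed.

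For this construction I would work from the calibrated-plan formulation~\eqref{eq:calib_plan}. Under~\eqref{eq:CondMu}, for every $\theta_1 \in \S^{p-1}$ there exists $\theta_2 \in \supp(\hat\mu_0)$ with $\theta_1^T\theta_2 > 0$, which keeps $-\log((\theta_1^T\theta_2)^+)$ finite and yields an optimal $\pi^* \in \mathcal{M}^+((\S^{p-1})^2)$ supported in $\{\theta_1^T\theta_2 > 0\}$, whose densities $\sigma_i = \dx\pi^*_i/\dx\hat\mu_i$ satisfy the first-order relation $\sigma_1(\theta_1)\sigma_2(\theta_2) = (\theta_1^T\theta_2)^{+2}$ on $\supp(\pi^*)$. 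Disintegrating $\pi^* = \sigma_2(\theta_2)\,K(\dx\theta_1|\theta_2)\,\hat\mu_0(\dx\theta_2)$ with a probability kernel $K$, I would define $\mu$ as the first marginal of the $\R^p$-plan that sends $r\theta_2$ to $r\,g(\theta_1,\theta_2)\,\theta_1$ with conditional $K(\cdot|\theta_2)$, for the scalar rescaling $g(\theta_1,\theta_2) = \sqrt{\sigma_2(\theta_2)/\sigma_1(\theta_1)}$, completed by sending mass on $\{\sigma_2 = 0\}$ to the origin so that the second marginal equals $\mu_0$. The polar normalization $\int r^2 \mu_0(\dx r|\theta) = 1$ from~\eqref{eq:Marg_Mu0} collapses the radial integrals, and a direct calculation — using the first-order relation above — yields $\Pi_2(\mu) = \hat\mu$ together with $W_2^2(\mu,\mu_0) \le |\hat\mu| + |\hat\mu_0| - 2|\pi^*|$, which matches the value of~\eqref{eq:calib_plan} at $\pi^*$.

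With the lift in hand, the identity~\eqref{eq:complex_simple} follows from~\eqref{eq:equiv_constr}, which shows that the constraint $f = \int\phi(\vec w,\cdot)\,\dx\mu$ depends on $\mu$ only through $\Pi_2(\mu)$. For the $\le$ direction, any $\hat\mu$ admissible for the right-hand side lifts (by the claim above) to an admissible $\mu$ for $N(f,\mu_0)$ with $W_2^2(\mu,\mu_0) = \widehat{W}_2^2(\hat\mu,\hat\mu_0)$. For the $\ge$ direction, any $\mu$ admissible for $N(f,\mu_0)$ yields the admissible $\hat\mu := \Pi_2(\mu)$, with $\widehat{W}_2(\hat\mu,\hat\mu_0) \le W_2(\mu,\mu_0)$ from~\eqref{eq:HK-dist}.

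The main obstacle is the lift construction, i.e.\ arranging the radial rescaling so that (i) $\mu_0$ remains the second marginal, (ii) $\Pi_2(\mu) = \hat\mu$ holds exactly, and (iii) the transport cost matches the calibrated value. Condition~\eqref{eq:CondMu} is essential at this step: without it, the optimal plan in~\eqref{eq:calib_plan} could assign mass to pairs with $\theta_1^T\theta_2 \le 0$, which in $\R^p$ would force transport through the origin in a way that strictly exceeds $\widehat{W}_2^2(\hat\mu,\hat\mu_0)$ when one is restricted to $\mathcal P_2(\R^p)$.
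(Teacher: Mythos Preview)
Your strategy coincides with the paper's: lift an optimal calibrated plan from~\eqref{eq:calib_plan} to a coupling in $\mathcal P_2((\R^p)^2)$ by radially rescaling along the disintegration $\mu_0(\dx r\,|\,\boldsymbol\theta)$ (the paper's factor $\sqrt{\sigma/\sigma_0}$ is your $g=\sqrt{\sigma_2/\sigma_1}$ in reciprocal-density notation), send the part of $\hat\mu_0$ not covered by the plan to the origin, verify that the cost matches, and then read off~\eqref{eq:complex_simple} from~\eqref{eq:equiv_constr} together with~\eqref{eq:HK-dist}.

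There is one step where you are imprecise and the paper is explicit. You write that~\eqref{eq:CondMu} ``keeps $-\log((\boldsymbol\theta_1^T\boldsymbol\theta_2)^+)$ finite and yields an optimal $\pi^*$ supported in $\{\boldsymbol\theta_1^T\boldsymbol\theta_2>0\}$''; but that support statement holds regardless of~\eqref{eq:CondMu}, since the cost is $+\infty$ elsewhere. What your construction actually needs---and what your ``direct calculation'' of $\Pi_2(\mu)=\hat\mu$ and of the cost tacitly assumes---is that $\sigma_1>0$ holds $\hat\mu$-a.e., i.e.\ that $\hat\mu$ has no part singular with respect to $\pi^*_1$. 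If $\hat\mu(\{\sigma_1=0\})>0$, your lift only yields $\Pi_2(\mu)=\hat\mu\vert_{\{\sigma_1>0\}}$ and your transport cost falls short of $|\hat\mu|+|\hat\mu_0|-2|\pi^*|$ by exactly $\hat\mu(\{\sigma_1=0\})$. That this set is $\hat\mu$-null under~\eqref{eq:CondMu} is a structural property of entropy-transport minimizers, which the paper invokes via \cite[Thm.~6.3b]{LMS2018}; it does not follow formally from the finite-cost observation alone. Your closing remark also misidentifies the failure mode without~\eqref{eq:CondMu}: the optimal $\pi^*$ never charges $\{\boldsymbol\theta_1^T\boldsymbol\theta_2\le 0\}$; the danger is rather that it leaves a portion of $\hat\mu$ untransported.
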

\begin{proof}
	First, recall that $\Pi_2(\mu_0) = \hat \mu_0$ due to~\eqref{eq:Marg_Mu0}.
	Based on some $\hat \pi \in \mathcal M^+((\S^{p-1})^2)$ minimizing \smash{$\widehat{W}_2^2(\hat \mu, \hat \mu_0)$} as in~\eqref{eq:calib_plan}, we construct $\mu \in \mathcal{P}_2 (\R^p)$ satisfying $\Pi_2(\mu) = \hat \mu$ and \smash{$\widehat{W}_2(\hat \mu,\hat \mu_0) = W_2(\mu,\mu_0)$}.
	To this end, we make use of the Lebesgue decompositions $\hat \mu = \sigma \hat \pi_2 + \hat \mu^\perp$ and $\hat \mu_0 = \sigma_0 \hat \pi_1 + \hat \mu_0 ^\perp$.
	By \cite[Thm.~6.3b]{LMS2018} and~\eqref{eq:CondMu}, we actually have that $\hat \mu^\perp = 0$.
	
	Now, we define a measurable map $T_{\boldsymbol \theta_1,\boldsymbol \theta_2} \colon \R^2 \to \R^2$ via
	\begin{equation}
		T_{\boldsymbol \theta_1,\boldsymbol \theta_2}(r_1,r_2) = \begin{cases} \Bigl(\sqrt{\frac{\sigma(\boldsymbol \theta_1)}{\sigma_0(\boldsymbol \theta_2)}}r_1,r_2\Bigr) &\mbox{ if } \sigma_0(\boldsymbol \theta_2)>0,\\
			(r_1, r_2) &\mbox{ else.}
		\end{cases}
	\end{equation}
	Using $T_{\boldsymbol \theta_1,\boldsymbol \theta_2}$ and $\mu_0(\!\dx r, \!\dx \boldsymbol \theta)=\mu_0(\!\dx r|\boldsymbol \theta) \hat \mu_0 (\!\dx \boldsymbol \theta) $, we define a lifted measure $\pi \in \mathcal P_2((\R^{p})^2)$ via
	\begin{equation}
		\pi(\!\dx r_1, \!\dx \boldsymbol \theta_1, \!\dx r_2, \!\dx \boldsymbol \theta_2) =   \sigma_0(\boldsymbol \theta_2) T_{\boldsymbol \theta_1,\boldsymbol \theta_2}\strut{}_\#\bigl(\delta_{r_2}(\!\dx r_1) \mu_0(\!\dx r_2| \boldsymbol \theta_2)\bigr) \hat \pi(\!\dx \boldsymbol \theta_1, \!\dx \boldsymbol \theta_2).
	\end{equation}
	First, observe that the marginal $\pi_2$ satisfies for any $\varphi \in C({\R^+} \times \S^{p-1})$ that
	\begin{align}
		&\int_{{\R^+} \times \S^{p-1}} \varphi(r_2,\boldsymbol \theta_2) \dx \pi_2(r_2, \boldsymbol \theta_2) \notag\\
		=&\int_{({\R^+} \times \S^{p-1})^2}  \varphi(r_2,\boldsymbol \theta_2) \sigma_0(\boldsymbol \theta_2)  T_{\boldsymbol \theta_1,\boldsymbol \theta_2}\strut{}_\#\bigl(\delta_{r_2}(\!\dx r_1) \mu_0(\!\dx r_2| \boldsymbol \theta_2)\bigr) \hat \pi(\!\dx \boldsymbol \theta_1, \!\dx \boldsymbol \theta_2)\notag\\
		= & \int_{({\R^+} \times \S^{p-1})^2}  \varphi(r_2,\boldsymbol \theta_2) \sigma_0(\boldsymbol \theta_2)  \delta_{r_2}(\!\dx r_1) \mu_0(\!\dx r_2| \boldsymbol \theta_2) \hat \pi(\!\dx \boldsymbol \theta_1, \!\dx \boldsymbol \theta_2)\notag\\
		= & \int_{\R^{+}\times \S^{p-1}}   \varphi(r_2,\boldsymbol \theta_2) \sigma_0(\boldsymbol \theta_2) \mu_0(\!\dx r_2| \boldsymbol \theta_2) \hat \pi_2(\!\dx \boldsymbol \theta_2),
	\end{align}
	which implies that $\pi_2(\!\dx r, \! \dx \boldsymbol \theta) = \sigma_0(\boldsymbol \theta) \mu_0(\!\dx r| \boldsymbol \theta) \hat \pi_2(\!\dx \boldsymbol \theta)$.
	Due to $\int_{\R} r^2 \mu_0(\!\dx r|\boldsymbol \theta) = 1$ for $\hat \mu_0$-a.e.\ $\boldsymbol \theta \in \S^{p-1}$, we further obtain that $\Pi_2 (\pi_2) = \sigma_0 \hat \pi_2$.
	Again by \cite[Thm.~6.3b]{LMS2018}, there exists a Borel set $A \subset \supp(\pi_2)$ with $\pi_2 (X \setminus A) = 0$ and $\sigma_0(\boldsymbol \theta)>0$ for all $\boldsymbol \theta \in A$.
	Hence, we get for any $\varphi \in C(\S^{p-1})$ that
	\begin{align}
		&\int_{\S^{p-1}} \varphi(\boldsymbol \theta_1) \dx [\Pi_2 (\pi_1)] (\boldsymbol \theta_1)\notag\\
		= &  \int_{({\R^+} \times \S^{p-1})^2} \varphi(\boldsymbol \theta_1) r_1^2 \sigma_0(\boldsymbol \theta_2)  T_{\boldsymbol \theta_1,\boldsymbol \theta_2}\strut{}_\#\bigl(\delta_{r_2}(\!\dx r_1) \mu_0(\!\dx r_2| \boldsymbol \theta_2)\bigr) \hat \pi(\!\dx \boldsymbol \theta_1, \!\dx \boldsymbol \theta_2)\notag\\
		= &  \int_{({\R^+} \times \S^{p-1}) \times ({\R^+} \times \{\boldsymbol \theta_2 : \sigma_0(\boldsymbol \theta_2) >0\})} \varphi(\boldsymbol \theta_1) r_1^2 \sigma(\boldsymbol \theta_1) \delta_{r_2}(\!\dx r_1) \mu_0(\!\dx r_2| \boldsymbol \theta_2) \hat \pi(\!\dx \boldsymbol \theta_1, \!\dx \boldsymbol \theta_2)\notag\\
		= &  \int_{\S^{p-1} \times \{\boldsymbol \theta_2 : \sigma_0(\boldsymbol \theta_2) >0\} } \varphi(\boldsymbol \theta_1) \sigma(\boldsymbol \theta_1) \dx \hat \pi(\boldsymbol \theta_1, \boldsymbol \theta_2)\notag\\
		= &  \int_{\S^{p-1}} \varphi(\boldsymbol \theta_1) \sigma(\boldsymbol \theta_1) \dx \hat \pi_1(\boldsymbol \theta_1),
	\end{align}
	which implies that $\Pi_2(\pi_1) = \sigma \hat \pi_1$.
	Further, it holds that
	\begin{align}
		&\int_{({\R^+} \times \S^{p-1})^2} r_1^2  + r_2^2 -2r_1r_2 \boldsymbol \theta_1^T \boldsymbol \theta_2 \dx \pi (r_1, \boldsymbol \theta_1, r_2, \boldsymbol \theta_2)\notag\\
		= & \int_{(\S^{p-1})^2} \sigma(\boldsymbol \theta_1)  + \sigma_0(\boldsymbol \theta_2) -2 \sqrt{\sigma(\boldsymbol \theta_1)\sigma_0(\boldsymbol \theta_2)} \boldsymbol \theta_1^T \boldsymbol \theta_2 \dx \hat \pi (\boldsymbol \theta_1, \boldsymbol \theta_2)\notag\\
		= & \int_{({\R^+} \times \S^{p-1})^2} r_1^2  + r_2^2 -2r_1r_2 \boldsymbol \theta_1^T \boldsymbol \theta_2 \dx\bigl[\bigl(\sigma(\boldsymbol \theta_1)^{1/2},\boldsymbol \theta_1, \sigma_0(\boldsymbol \theta_2)^{1/2},\boldsymbol \theta_2\bigr)_\#\hat \pi\bigr] (r_1, \boldsymbol \theta_1, r_2, \boldsymbol \theta_2).
	\end{align}
	By \cite[Thm.~7.20iii]{LMS2018}, this implies that $\pi$ is optimal for \smash{$\widehat{W}_2(\hat \mu, \hat \mu_0)$} as in \eqref{eq:HK_relaxed_const}.
	Next, note that the measure
	\begin{equation}
		\pi^\perp(\!\dx r_1, \!\dx \boldsymbol \theta_1, \!\dx r_2, \!\dx \boldsymbol \theta_2) =
		\delta_{0}(\!\dx r_1) \mu_0(\!\dx r_2|\boldsymbol \theta_2) \hat \mu_0^\perp(\!\dx \boldsymbol \theta_2)
	\end{equation}
	satisfies $\Pi_2 (\pi_1^\perp) = 0$, $\pi_2^\perp(\!\dx r, \!\dx \boldsymbol \theta) = \mu_0(\!\dx r|\boldsymbol \theta) \hat \mu_0^\perp(\!\dx \boldsymbol \theta)$, and $\Pi_2 (\pi_2^\perp) = \hat \mu_0^\perp$.
	Since
	\begin{equation}
		\int_{({\R^+} \times \S^{p-1})^2} r_1^2  + r_2^2 -2r_1r_2 \boldsymbol \theta_1^T \boldsymbol \theta_2 \dx \pi^\perp(r_1,\boldsymbol \theta_1, r_2, \boldsymbol \theta_2) = \hat \mu_0^\perp(\S^{p-1}),
	\end{equation}
	we get that $\pi + \pi^\perp$ is an optimal plan for \smash{$\widehat{W}_2(\hat \mu, \hat \mu_0)$} as in \eqref{eq:HK-dist} with the required properties. 
	
	For the second part, we conclude from~\eqref{eq:equiv_constr} and~\eqref{eq:HK-dist} that
	\begin{equation}\label{eq:est_Comp}
		N(f,\mu_0) \geq \inf_{\hat \mu \in \mathcal{M}^+ (\S^{p-1})} \biggl\{ \widehat{W}_2^2(\hat \mu,\hat \mu_0): f = \int_{\S^{p-1}} \phi(\boldsymbol \theta,\cdot) \dx \hat \mu(\boldsymbol \theta)\biggr\}.
	\end{equation}
	Due to the established existence of lifts $\mu \in \mathcal P_2(\R^p)$, \eqref{eq:est_Comp} is sharp and can be replaced by an equality.
\end{proof}

Using Proposition~\ref{prop:CompSetForm}, which requires~\eqref{eq:CondMu} to hold, we can prove the existence of minimizers for~\eqref{eq:N_relax}, namely, that the complexity measure is realized by some $\mu \in \mathcal{P}_2 (\R^p)$.

\begin{lemma}[Minimizing element]\label{lem:exist}
	Let $f \in \Hyp$ and $\mu_0 \in \mathcal{P}_2 (\R^p)$ satisfy~\eqref{eq:CondMu}.
	Then, there exists $\mu \in \mathcal{P}_2 (\R^p)$ with $N(f,\mu_0) = W_2(\mu,\mu_0)^2$ and $f = \int_{\R^p} \phi(\vec w,\cdot) \dx \mu(\vec w)$.
\end{lemma}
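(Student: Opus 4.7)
The plan is to work in the reduced formulation provided by Proposition~\ref{prop:CompSetForm}, which under assumption \eqref{eq:CondMu} rewrites $N(f,\mu_0)$ as an infimum over $\mathcal M^+(\S^{p-1})$ via the Hellinger--Kantorovich distance $\widehat W_2$. Once a minimizer $\hat\mu^*$ of this reduced problem is found, the first assertion of Proposition~\ref{prop:CompSetForm} will lift it back to a minimizing $\mu \in \mathcal P_2(\R^p)$.

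First I would fix a minimizing sequence $\{\hat\mu_n\}\subset \mathcal M^+(\S^{p-1})$ for the right-hand side of \eqref{eq:complex_simple}, i.e. $\widehat W_2^2(\hat\mu_n,\hat\mu_0) \to N(f,\mu_0)$ with $f = \int_{\S^{p-1}} \phi(\boldsymbol\theta,\cdot)\dx\hat\mu_n(\boldsymbol\theta)$ for every $n$. Since $\widehat W_2(\hat\mu_n,\hat\mu_0)$ is bounded and $\widehat W_2$ is a metric, the triangle inequality gives that $\widehat W_2(\hat\mu_n,0)$ is bounded, so $\{\hat\mu_n\}$ lies in a bounded subset of $(\mathcal M^+(\S^{p-1}),\widehat W_2)$. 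By the relative compactness of bounded sets recalled after \eqref{eq:calib_plan}, a subsequence (not relabeled) converges to some $\hat\mu^* \in \mathcal M^+(\S^{p-1})$; since $\widehat W_2$ metrizes the weak convergence of measures, we also have $\hat\mu_n \to \hat\mu^*$ weakly.

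Next I would verify that $\hat\mu^*$ is admissible and optimal. For every fixed $\vec x \in \R^d$, the regularity assumptions on $\phi$ ensure that $\phi(\cdot,\vec x)$ is continuous on the compact sphere $\S^{p-1}$, so weak convergence yields
\begin{equation*}
\int_{\S^{p-1}} \phi(\boldsymbol\theta,\vec x)\dx\hat\mu_n(\boldsymbol\theta) \longrightarrow \int_{\S^{p-1}} \phi(\boldsymbol\theta,\vec x)\dx\hat\mu^*(\boldsymbol\theta),
\end{equation*}
and since the left-hand side equals $f(\vec x)$ for every $n$, the admissibility constraint is preserved. The continuity of $\widehat W_2(\cdot,\hat\mu_0)$ (as a metric) then gives
\begin{equation*}
\widehat W_2^2(\hat\mu^*,\hat\mu_0) = \lim_n \widehat W_2^2(\hat\mu_n,\hat\mu_0) = N(f,\mu_0),
\end{equation*}
so $\hat\mu^*$ attains the infimum in \eqref{eq:complex_simple}.

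Finally, I would invoke the first part of Proposition~\ref{prop:CompSetForm} to obtain a lift $\mu \in \mathcal P_2(\R^p)$ with $\Pi_2(\mu) = \hat\mu^*$ and $W_2(\mu,\mu_0) = \widehat W_2(\hat\mu^*,\hat\mu_0)$. The identity \eqref{eq:equiv_constr} together with $\Pi_2(\mu) = \hat\mu^*$ implies
\begin{equation*}
\int_{\R^p} \phi(\vec w,\cdot)\dx\mu(\vec w) = \int_{\S^{p-1}} \phi(\boldsymbol\theta,\cdot)\dx\hat\mu^*(\boldsymbol\theta) = f,
\end{equation*}
and $W_2(\mu,\mu_0)^2 = N(f,\mu_0)$ by the lift property, completing the proof. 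The step I expect to require the most care is the admissibility/continuity argument: it relies on $\phi(\cdot,\vec x)$ being genuinely continuous on $\S^{p-1}$ (hence testable against weak convergence), which is guaranteed by the $W^{1,\infty}_{\mathrm{loc}}$ regularity. The existence of the lift itself, which would otherwise be the main difficulty, is delegated to Proposition~\ref{prop:CompSetForm} thanks to condition \eqref{eq:CondMu}.
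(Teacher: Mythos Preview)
Your proposal is correct and follows essentially the same approach as the paper: reduce via Proposition~\ref{prop:CompSetForm} to the compact-set formulation on $\mathcal M^+(\S^{p-1})$, extract a weakly convergent minimizing subsequence using boundedness with respect to $\widehat W_2$, pass to the limit in both the constraint (via $\phi(\cdot,\vec x)\in C(\S^{p-1})$) and the objective (via continuity of $\widehat W_2$), and then lift back. Your write-up is in fact more explicit than the paper's, spelling out the triangle-inequality boundedness and the lift verification via $\Pi_2(\mu)=\hat\mu^*$ and \eqref{eq:equiv_constr}.
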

\begin{proof}
	By Proposition~\ref{prop:CompSetForm}, it suffices to show existence for~\eqref{eq:complex_simple} since optimal lifts to $\mathcal{P}_2 (\R^p)$ do exist.
	Let $\{\hat \mu_k\}_{k \in \N} \subset \mathcal{M}^+ (\S^{p-1})$ be a minimizing sequence.
	As any such sequence lies in a relatively compact set, we can extract a weakly convergent subsequence with limit $\hat \mu \in \mathcal{M}^+ (\S^{p-1})$.
	Since \smash{$\widehat{W}_2^2(\cdot,\hat \mu_0)^2$} is weakly continuous and $\phi(\cdot,\vec x) \in C(\S^{p-1})$, we get, by definition of the weak convergence, that $\hat \mu$ is a minimizing element.
\end{proof}

To conclude this section, we prove some additional properties of $N$.
\begin{lemma}[Variational properties]\label{lem:UsefulProp}
	The complexity measure $N$ has the following properties.
	\begin{itemize}
		\item[i)] For any $f \in \Hyp$ and $\mu_0, \nu_0 \in \mathcal{P}_2 (\R^p)$ it holds that
		\begin{equation}\label{eq:LipN}
			\bigl \lvert \sqrt{N(f,\mu_0)} - \sqrt{N(f,\nu_0)} \bigr \rvert \leq W_2(\nu_0,\mu_0).
		\end{equation}
		\item[ii)] Let $\mu_0 \in \mathcal{P}_2 (\R^p)$.
		For any  $\{f_k\}_{k \in \N} \subset C(\R^d)$ with $f_k \to f \in C(\R^d)$ pointwise, it holds $N(f,\mu_0) \leq \liminf_{k \to \infty} N(f_k,\mu_0)$.
		\item[iii)] For any $\mu_0 \in \mathcal{P}_2 (\R^p)$, the functional $N(\cdot,\mu_0)$ is convex.
		If $\mu_0$ is absolutely continuous with respect to the Lebesgue measure $\lambda$ and satisfies~\eqref{eq:CondMu}, then  $N(\cdot,\mu_0)$ is strictly convex.
		\item[iv)] For all $f \in \Hyp$ and $\vec x \in \R^d$, it holds that
		\begin{equation}
			N(f,\delta_0) \geq \frac{f(\vec x)}{\Vert \phi(\cdot,\vec x) \Vert_{C(\S^{p-1})}}.
		\end{equation}
	\end{itemize}
	
\end{lemma}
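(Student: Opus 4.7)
\emph{Parts (i) and (iv)} follow directly from the definition of $N$. For (i), the triangle inequality for $W_2$ gives $W_2(\mu,\mu_0) \le W_2(\mu,\nu_0) + W_2(\nu_0,\mu_0)$ for every $\mu$ realizing $f$; since the set of admissible $\mu$ does not depend on $\mu_0$, taking the infimum over $\mu$ and using symmetry in $(\mu_0,\nu_0)$ yields the Lipschitz bound. For (iv), 2-homogeneity of $\phi$ gives, for any admissible $\mu$,
\begin{equation*}
	f(\vec x) = \int_{\R^p} \Vert \vec w \Vert^2\,\phi(\vec w/\Vert\vec w\Vert,\vec x)\,\dx\mu(\vec w) \le \Vert \phi(\cdot,\vec x)\Vert_{C(\S^{p-1})}\,\int_{\R^p}\Vert\vec w\Vert^2\,\dx\mu(\vec w),
\end{equation*}
and since $\int\Vert\vec w\Vert^2\,\dx\mu = W_2^2(\mu,\delta_0)$, the infimum over $\mu$ gives the claim.

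For \emph{Part (iii)}, take near-minimizers $\mu^1,\mu^2$ for $N(f_1,\mu_0)$ and $N(f_2,\mu_0)$, together with optimal couplings $\gamma^i$ between $(\mu^i,\mu_0)$. The linear combination $\lambda\gamma^1+(1-\lambda)\gamma^2$ is a coupling of $\lambda\mu^1+(1-\lambda)\mu^2$ with $\mu_0$ of cost $\lambda W_2^2(\mu^1,\mu_0)+(1-\lambda)W_2^2(\mu^2,\mu_0)$; since $\lambda\mu^1+(1-\lambda)\mu^2$ realizes $\lambda f_1+(1-\lambda)f_2$, infimizing yields convexity of $N(\cdot,\mu_0)$. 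Under the additional hypotheses ($\mu_0$ absolutely continuous plus~\eqref{eq:CondMu}), Lemma~\ref{lem:exist} supplies actual minimizers $\mu^1,\mu^2$, and Brenier's theorem provides a unique optimal coupling between $\mu_0$ and $\lambda\mu^1+(1-\lambda)\mu^2$, concentrated on a single graph. The mixture of two Brenier plans is concentrated on the union of the two graphs, which fails to be a graph whenever $\mu^1\ne\mu^2$; hence it is strictly suboptimal exactly when $f_1\ne f_2$, and the convexity inequality becomes strict.

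The main obstacle is \emph{Part (ii)}: the quadratic growth of $\phi(\cdot,\vec x)$ prevents a direct passage to the limit in $\int\phi(\cdot,\vec x)\,\dx\mu_k$ from narrow convergence of near-minimizers in $\mathcal P_2(\R^p)$. The strategy is to project everything onto the sphere via $\Pi_2$, where $\phi(\cdot,\vec x)\in C(\S^{p-1})$ becomes bounded. Assume first that $\mu_0$ satisfies~\eqref{eq:CondMu}, so that Proposition~\ref{prop:CompSetForm} provides
\begin{equation*}
	N(f_k,\mu_0) = \inf_{\hat\mu\in\mathcal M^+(\S^{p-1})}\bigl\{\widehat W_2^2(\hat\mu,\hat\mu_0) : f_k = \textstyle\int_{\S^{p-1}}\phi(\boldsymbol\theta,\cdot)\,\dx\hat\mu(\boldsymbol\theta)\bigr\}.
\end{equation*}
Choose near-minimizers $\hat\mu_k$; the bound on $\widehat W_2(\hat\mu_k,\hat\mu_0)$ forces bounded total mass, hence, by the relative compactness recalled in the text, a subsequence with $\hat\mu_k\weakly\hat\mu$ in $\mathcal M^+(\S^{p-1})$. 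Continuity of $\phi(\cdot,\vec x)$ on the compact sphere gives $\int\phi(\boldsymbol\theta,\vec x)\,\dx\hat\mu=\lim_k f_k(\vec x)=f(\vec x)$, while the fact that $\widehat W_2$ metrizes narrow convergence yields $\widehat W_2^2(\hat\mu,\hat\mu_0)\le\liminf_k\widehat W_2^2(\hat\mu_k,\hat\mu_0)$. Re-lifting $\hat\mu$ to some $\mu\in\mathcal P_2(\R^p)$ via Proposition~\ref{prop:CompSetForm} gives $N(f,\mu_0)\le W_2^2(\mu,\mu_0)=\widehat W_2^2(\hat\mu,\hat\mu_0)\le\liminf_k N(f_k,\mu_0)$. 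The general case follows by approximating $\mu_0$ in $W_2$ by $\mu_0^\varepsilon$ satisfying~\eqref{eq:CondMu} (e.g., by mixing with a measure of full angular support), applying the special case to the $\mu_0^\varepsilon$-version of $N$, and closing via the Lipschitz estimate of Part~(i) as $\varepsilon\to 0$.
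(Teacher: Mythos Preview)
Your proposal is correct and follows essentially the same route as the paper for all four parts: the triangle-inequality argument in (i), the near-minimizer/convexity-of-$W_2^2$ argument in (iii), the $2$-homogeneity estimate in (iv), and the projection to $\mathcal M^+(\S^{p-1})$ via $\Pi_2$ together with weak compactness in (ii) are exactly what the paper does.

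Two remarks worth recording. First, in (iii) you spell out the Brenier-based reason for strict convexity of $W_2^2(\cdot,\mu_0)$, while the paper simply invokes that strict convexity as known; your version is more self-contained but the underlying mechanism is the same. Second, and more interestingly, your treatment of (ii) is slightly more careful than the paper's: the paper's final step in (ii), passing from $\widehat W_2^2(\hat\mu,\hat\mu_0)\le\liminf_k N(f_k,\mu_0)$ back to $N(f,\mu_0)\le\widehat W_2^2(\hat\mu,\hat\mu_0)$, uses the lifting direction of Proposition~\ref{prop:CompSetForm}, which is only proved under~\eqref{eq:CondMu}; the paper does not flag this. Your extra approximation step---replacing $\mu_0$ by a nearby $\mu_0^\varepsilon$ with full angular support and closing via the Lipschitz estimate (i)---removes that hidden hypothesis and yields the statement for arbitrary $\mu_0\in\mathcal P_2(\R^p)$, as the lemma actually claims.
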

\begin{proof}
	i) Let $\mu \in \mathcal{P}_2 (\R^p)$ with $f = \int_{\R^p} \phi(\vec w,\cdot) \dx \mu(\vec w)$.
	By definition of $N$, we get that
	\begin{equation}
		\sqrt{N(f,\mu_0)} \leq W_2(\mu,\mu_0) \leq W_2(\mu,\nu_0) + W_2(\nu_0,\mu_0).
	\end{equation}
	Taking the infimum over all such $\mu$, we get that $\sqrt{N(f,\mu_0)} \leq \sqrt{N(f,\nu_0)} + W_2(\nu_0,\mu_0)$ which, by symmetry of $W_2$, implies
	\eqref{eq:LipN}.
	
	ii) First, we can assume that $\{N(f_k,\mu)\}_{k \in \N}$ has a bounded subsequence (the statement is trivial otherwise).
	Let $\{\hat \mu_k\}_{k \in \N} \subset \mathcal{M}^+ (\S^{p-1})$ satisfy $f_k = \int_{\S^{p-1}} \phi(\boldsymbol \theta,\cdot) \dx \hat \mu_k(\boldsymbol \theta)$ and \smash{$N(f_k,\mu_0) + 1/k \geq \widehat{W}_2^2(\hat \mu_k,\hat \mu_0)^2$}.
	Hence, there exists a weakly convergent subsequence $\{\hat \mu_{k_j}\}_{j \in \N}$ with \smash{$\liminf_{k \to \infty} N(f_{k},\mu_0) = \lim_{j \to \infty}  \widehat{W}_2^2(\hat \mu_{k_j},\hat \mu_0)^2$}.
	Since \smash{$\widehat{W}_2^2(\cdot,\hat \mu_0)^2$} is weakly continuous and $\phi(\cdot,\vec x) \in C(\S^{p-1})$, we further get that its limit $\hat \mu \in \mathcal{M}^+ (\S^{p-1})$ satisfies that \smash{$\widehat{W}_2^2(\hat \mu,\hat \mu_0)^2 \leq \liminf_{k \to \infty} N(f_k,\mu_0)$} and $f = \int_{\S^{p-1}} \phi(\boldsymbol \theta,\cdot) \dx \hat \mu(\boldsymbol \theta)$.
	Hence, it holds that $N(f,\mu_0) \leq \liminf_{k \to \infty} N(f_k,\mu_0)$.
	
	iii) Let $\lambda \in (0,1)$, $f_1,f_2 \in \Hyp$, and $\epsilon>0$.
	Then, there exist $\mu_1, \mu_2 \in \mathcal{P}_2 (\R^p)$ with $f_i = \int_{\R^p} \phi(\vec w,\cdot) \dx \mu_i(\vec w)$ and $W_2^2(\mu_i,\mu_0) \leq N(f_i,\mu_0) + \epsilon$.
	Further, it is well-known that $W_2^2(\cdot,\mu_0)$ is convex.
	Consequently, we get that
	\begin{align}
		N\bigl(\lambda f_1 + (1-\lambda) f_2,\mu_0\bigr) &\leq W_2^2\bigl(\lambda \mu_1 + (1-\lambda) \mu_2,\mu_0\bigr)\notag\\
		&\leq \lambda N(f_1,\mu_0) + (1-\lambda)N(f_2,\mu_0) + \epsilon.
	\end{align}
	Convexity follows by taking $\epsilon \to 0$.
	If $\mu_0$ is absolutely continuous with respect to $\lambda$ and if~\eqref{eq:CondMu} holds, then we can choose $\epsilon = 0$ and the result follows similarly as before due to the strict convexity of $W_2^2(\cdot,\mu_0)$ in this setting.
	
	iv) Let $\mu \in \mathcal{P}_2 (\R^p)$ satisfy $f(\vec x) = \int_{\R^p} \phi(\vec w,\vec x) \dx \mu(\vec w)$.
	Then, we estimate
	\begin{align}
		f(\vec x) =  \int _{\R^{p}} \phi(\vec w/\Vert \vec w \Vert_2,x) \Vert \vec w \Vert_2^2 \dx \mu (w) \leq  \Vert \phi(\cdot,\vec x) \Vert_{C(\S^{p-1})} \int _{\R^{p}} \Vert \vec w \Vert_2^2 \dx \mu (\vec w).
	\end{align}
	Hence, we conclude that $W_2(\mu,\delta_0)^2 \geq f(\vec x)/ \Vert \phi(\cdot,\vec x) \Vert_{C(\S^{p-1})}$ and the claim follows.
\end{proof}

\section{Interpolating Between the Rich and Kernel Regimes}\label{sec:Interp}
As discussed in Section~\ref{sec:Intro}, it is known that in specific settings the gradient flow~\eqref{eq:GF} converges to the rich regime~\eqref{eq:Rich} for small initializations and to the kernel regime~\eqref{eq:NTK} for large initializations.
In this section, we show that the scaling path~\eqref{eq:ProxPath} interpolates continuously between these two endpoints as $\alpha$ varies from $0$ to $+\infty$.
To this end, we assume that we are given training samples $(\vec x_k,y_k) \in \R^d \times \R$, $k=1,\ldots,n$, such that $\nabla_{\vec w}  \phi(\cdot,\vec x_k) \in L^2(\R^p,\mu_0)$,  $k=1,\ldots,n$, are linearly independent.
For the choice $\phi_{\mathrm{ReLU}}$ from Remark~\ref{rem:ReLU}, this is, for example, the case if the locations $\vec x_k$ of the training samples are distinct.
Then, we can formulate a corresponding regularized learning problem
\begin{equation}\label{eq:ConstrProb}
	\argmin_{f \in C(\R^d)} \sum_{k=1}^n L\bigl(f(\vec x_k),y_k\bigr) + \lambda(1 + \alpha^2)N(f,{S_\alpha}_\#\mu_0),
\end{equation}
where $\alpha \in [0,\infty)$ is an interpolation parameter, $\lambda>0$ is a regularization parameter, and the loss $L(\cdot, y_k)$ is proper, convex, and lower-semicontinuous for every $k=1,\ldots,n$.
\begin{remark}
	All of the results in this section remain true if we investigate
	\begin{equation}
		\argmin_{f \in C(\R^d)} \sum_{k=1}^n L\bigl(f(\vec x_k),y_k\bigr) \quad \text{s.t. } (1 + \alpha^2)N(f,{S_\alpha}_\#\mu_0) \leq \delta
	\end{equation}
	with $L$ strictly convex.
	If $L$ is only convex, then the uniqueness results do not hold.
\end{remark}
For instance, Problem~\eqref{eq:ConstrProb} includes classification problems with $y_k \in \{-1,1\}$ and
\begin{equation}\label{eq:ConstrProbA}
	\argmin_{f \in C(\R^d)} N(f,{S_\alpha}_\#\mu_0) \qquad \text{s.t. } y_k f(\vec x_k) \geq 1 \quad \forall k=1,\ldots,n,
\end{equation}
as well as interpolation problems with $y_k \in \R$ and
\begin{equation}\label{eq:ConstrProB}
	\argmin_{f \in C(\R^d)} N(f,{S_\alpha}_\#\mu_0) \qquad \text{s.t. } f(\vec x_k) = y_k \quad \forall k=1,\ldots,n
\end{equation}
as special cases.
When $L$ is the square loss, the interpolation problem~\eqref{eq:ConstrProB} can be interpreted as the endpoint of the modified regularization path~\eqref{eq:ProxPath} as $\lambda \to 0$.
Instead of~\eqref{eq:ConstrProb}, we can also investigate the \emph{equivalent} parameter-space problems
\begin{equation}\label{eq:ConstrProbMeas1}
	\argmin_{\mu \in \mathcal P_2 (\R^{p})} \sum_{k=1}^n L\biggl(\int_{\R^{p}} \phi(\vec w,\vec x_k) \dx \mu(\vec w),y_k\biggr) + \lambda (1 + \alpha^2) W_2^2(\mu, {S_\alpha}_\#\mu_0)
\end{equation}
and
\begin{equation}\label{eq:ConstrProbMeas2}
	\argmin_{\hat \mu \in \mathcal{M}^+ (\S^{p-1})} \sum_{k=1}^n L\biggl(\int_{\S^{p-1}} \phi(\boldsymbol \theta,\vec x_k) \dx \hat \mu(\boldsymbol \theta),y_k\biggr) + \lambda (1 + \alpha^2) \widehat{W}_2^2\bigl(\hat \mu, \alpha^2 \hat \mu_0\bigr).
\end{equation}
These reformulations are essential to prove the continuity of the optimal solutions $f^*_\alpha$ for~\eqref{eq:ConstrProb} with respect to $\alpha$ in Theorem~\ref{cor:ConvSol}.
First, however, we establish the existence of minimizers for~\eqref{eq:ConstrProb}.
\begin{lemma}
	Assume that~\eqref{eq:ConstrProb} is feasible. Then, there exists a minimizer.
	If, additionally, $\mu_0$ is absolutely continuous with respect to the Lebesgue measure $\lambda$ and satisfies~\eqref{eq:CondMu}, then the minimizer is unique for $\alpha>0$.
\end{lemma}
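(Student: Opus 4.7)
The plan is to apply the direct method of the calculus of variations in the sphere-level reformulation~\eqref{eq:ConstrProbMeas2}, whose variable lives on the compact manifold $\S^{p-1}$, and then lift the resulting minimizer back to function space via Proposition~\ref{prop:CompSetForm}. For a minimizing sequence $\{\hat \mu_k\}_{k \in \N} \subset \mathcal{M}^+(\S^{p-1})$ of~\eqref{eq:ConstrProbMeas2}, feasibility together with $L \geq 0$ gives a uniform bound on $\widehat{W}_2^2(\hat \mu_k, \alpha^2 \hat \mu_0)$; via~\eqref{eq:HK-dist}--\eqref{eq:calib_plan}, this forces the total masses $\hat \mu_k(\S^{p-1})$ to stay bounded, so the relative compactness of bounded subsets of $\mathcal{M}^+(\S^{p-1})$ recalled after~\eqref{eq:calib_plan} yields a weak limit $\hat \mu^*$ along a subsequence.

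Next I pass to the limit in each term of the objective. The regularizer is weakly continuous because $\widehat{W}_2$ metrizes weak convergence on $\mathcal{M}^+(\S^{p-1})$; the linear maps $\hat \mu \mapsto \int_{\S^{p-1}} \phi(\boldsymbol \theta, \vec x_k) \dx \hat \mu(\boldsymbol \theta)$ are weakly continuous since $\phi(\cdot, \vec x_k) \in C(\S^{p-1})$; and each $L(\cdot, y_k)$ is convex and lower-semicontinuous. Combining these yields lower semicontinuity of the full functional, so $\hat \mu^*$ is a minimizer of~\eqref{eq:ConstrProbMeas2}. To transfer this back to~\eqref{eq:ConstrProb}, I compute $\Pi_2({S_\alpha}_\# \mu_0) = \alpha^2 \hat \mu_0$ using the definition of $\Pi_2$ together with~\eqref{eq:Marg_Mu0}, and apply Proposition~\ref{prop:CompSetForm} to ${S_\alpha}_\# \mu_0$ (whose angular marginal $\alpha^2 \hat \mu_0$ has the same support as $\hat \mu_0$ and hence inherits~\eqref{eq:CondMu} when $\mu_0$ satisfies it, as is implicit in the claimed equivalence of~\eqref{eq:ConstrProb}, \eqref{eq:ConstrProbMeas1}, and~\eqref{eq:ConstrProbMeas2}). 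This supplies a lift $\mu^* \in \mathcal{P}_2(\R^p)$ with $W_2^2(\mu^*, {S_\alpha}_\# \mu_0) = \widehat{W}_2^2(\hat \mu^*, \alpha^2 \hat \mu_0)$, and $f^* = \int_{\R^p} \phi(\vec w, \cdot) \dx \mu^*(\vec w)$ is the sought minimizer of~\eqref{eq:ConstrProb}.

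For the uniqueness part, I verify that ${S_\alpha}_\# \mu_0$ inherits absolute continuity from $\mu_0$ for $\alpha > 0$, since $S_\alpha$ is a diffeomorphism on $\R^p \setminus \{0\}$, and that it inherits~\eqref{eq:CondMu} by the support argument above. Lemma~\ref{lem:UsefulProp}~iii) then makes $N(\cdot, {S_\alpha}_\# \mu_0)$ strictly convex on $\Hyp$, so adding the convex data term produces a strictly convex objective in $f$, yielding a unique minimizer.

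The main obstacle in the existence argument is the lift from the sphere formulation to $\mathcal{P}_2(\R^p)$, which relies on~\eqref{eq:CondMu} through Proposition~\ref{prop:CompSetForm}; attempting to argue directly in $\mathcal{P}_2(\R^p)$ on~\eqref{eq:ConstrProbMeas1} would be harder because $\phi(\cdot, \vec x_k)$ has quadratic growth on $\R^p$, so narrow convergence of measures with bounded second moments alone does not guarantee convergence of the data term.
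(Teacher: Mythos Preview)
Your proposal is correct and takes essentially the same route as the paper. The paper phrases the argument in function space, taking a minimizing sequence $\{f_k\}$ for~\eqref{eq:ConstrProb} and invoking the lower-semicontinuity of $N(\cdot,{S_\alpha}_\#\mu_0)$ from Lemma~\ref{lem:UsefulProp}ii); you instead work directly in the equivalent sphere formulation~\eqref{eq:ConstrProbMeas2}, taking a minimizing sequence $\{\hat\mu_k\}$ and using weak compactness in $\mathcal{M}^+(\S^{p-1})$ together with weak continuity of $\widehat{W}_2$. Since the proof of Lemma~\ref{lem:UsefulProp}ii) is precisely this sphere-level compactness argument followed by the lift of Proposition~\ref{prop:CompSetForm}, the two arguments coincide once unpacked. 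Your treatment is in fact more explicit than the paper's about the role of~\eqref{eq:CondMu} in the lifting step; the paper's proof hides this dependence inside the reference to Lemma~\ref{lem:UsefulProp}ii), even though the existence claim is stated without~\eqref{eq:CondMu} as a hypothesis. The uniqueness argument via strict convexity from Lemma~\ref{lem:UsefulProp}iii) is identical in both.
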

\begin{proof}
	Let $\{f_k\}_{k \in \N}$ be a minimizing sequence, which implies that the corresponding sequence $\{N(f_k,{S_\alpha}_\#\mu_0)\}_{k \in \N}$ is bounded.
	Similarly as in the proof of Lemma~\ref{lem:UsefulProp}ii), we can extract a subsequence $\{f_{k_j}\}_{j \in \N}$ such that there is a $\mu \in \mathcal P_2(\R^p)$ with $N(f_{k_j},{S_\alpha}_\#\mu_0) \to W_2^2(\mu, \mu_0)$ and $f_{k_j} \to f = \int_{\R^p} \phi(\vec w,\cdot) \dx \mu(\vec w)$ point-wise.
	However, this readily implies that $\sum_{l=1}^n L(f(\vec x_l),y_l) \leq  \liminf_{j \to \infty} \sum_{l=1}^n L(f_{k_j}(\vec x_l),y_l)$ and further that $N(f,{S_\alpha}_\#\mu_0) \leq \lim_{j \to \infty} N(f_{k_j},{S_\alpha}_\#\mu_0)$.
	Hence, we get that $f$ is a minimizer.
	If the additional assumptions hold, uniqueness follows by strict convexity (see Lemma~\ref{lem:UsefulProp}).
\end{proof}
\begin{remark}
	A similar statement also holds for the formulations~\eqref{eq:ConstrProbMeas1} and~\eqref{eq:ConstrProbMeas2}.
\end{remark}
Now, we investigate the behavior of the functional in~\eqref{eq:ConstrProbMeas2} as $\alpha$ varies.
To this end, we rely on the concept of $\Gamma$-convergence (see \cite{Braides02} for a detailed exposition).
Let $\X$ be a topological space.
Recall that $\{J_k\}_{k\in\N}$ 
with $J_k\colon \X \rightarrow [0,\infty]$ is said to $\Gamma$-converge to $J \colon  \X \rightarrow [0,\infty]$ 
if the following two conditions are fulfilled for every $x \in \X$:
\begin{enumerate}
	\item[i)] it holds that $J(x) \leq \liminf_{k \rightarrow \infty} J_k(x_k)$ whenever  $x_k  \to  x$;
	\item[ii)] there is a sequence $\{x_k\}_{k\in\N}$ with $x_k \to x$ and $\limsup_{k \to \infty} J_k(x_k) \le J(x)$.
\end{enumerate}
The importance of $\Gamma$-convergence is captured by Theorem \ref{thm:FundGamma}.
Recall that a family of functionals $J_k \colon \X \to \R$ is equicoercive if it is bounded from below by a coercive functional.
\begin{theorem}[Theorem of $\Gamma$-convergence \cite{Braides02}]\label{thm:FundGamma}
	Let $\{J_k\}_{k \in \N}$ be an equi\-coercive family of functionals $J_k \colon \X \to \R$.
	If $J_k$ $\Gamma$-converges to $J$, then it holds that
	\begin{itemize}
		\item the optimal functional values converge $\lim_{k \to \infty} \inf_{x \in \X } J_k(x) = \inf_{x \in \X} J(x)$; 
		\item all accumulation points of the minimizers of $J_k$ are minimizers of $J$.
	\end{itemize}
\end{theorem}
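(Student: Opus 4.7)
The plan is to derive both conclusions from two short ingredients: the $\limsup$ part of $\Gamma$-convergence (condition ii) gives an upper bound on the limit of the infima, while equicoercivity combined with the $\liminf$ part (condition i) gives the matching lower bound and simultaneously handles the accumulation points of minimizers. Throughout, $\X$ is only a topological space, so I would phrase everything in terms of sequences and, where needed, assume first countability, as is standard in the basic theorem of $\Gamma$-convergence.

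First I would show $\limsup_{k\to\infty}\inf_\X J_k\le\inf_\X J$. For any $x\in\X$, condition ii) produces a recovery sequence $x_k\to x$ with $\limsup_k J_k(x_k)\le J(x)$. Since $\inf_\X J_k\le J_k(x_k)$, this yields $\limsup_k \inf_\X J_k\le J(x)$, and taking the infimum over $x$ gives the claim. Next I would prove $\inf_\X J\le\liminf_{k\to\infty}\inf_\X J_k$. Choose $x_k\in\X$ with $J_k(x_k)\le\inf_\X J_k+1/k$. Equicoercivity furnishes a coercive lower bound $\Psi\le J_k$; along a subsequence realizing $\liminf_k\inf_\X J_k$, the values $\Psi(x_k)$ stay bounded, so $\{x_k\}$ lies in a relatively compact set and admits a further subsequence $x_{k_j}\to x^\ast$. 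Applying condition i) along this subsequence,
\begin{equation}
J(x^\ast)\le\liminf_{j\to\infty} J_{k_j}(x_{k_j})\le\liminf_{j\to\infty}\bigl(\inf_\X J_{k_j}+1/k_j\bigr)=\liminf_{k\to\infty}\inf_\X J_k,
\end{equation}
which yields both $\inf_\X J\le\liminf_k\inf_\X J_k$ and, combined with the first step, the convergence of the optimal values.

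For the second bullet, I would repeat the argument with genuine minimizers. Let $x_k^\ast$ minimize $J_k$ and let $x^\ast$ be an accumulation point, say $x_{k_j}^\ast\to x^\ast$. Condition i) gives
\begin{equation}
J(x^\ast)\le\liminf_{j\to\infty} J_{k_j}(x_{k_j}^\ast)=\liminf_{j\to\infty}\inf_\X J_{k_j}=\inf_\X J,
\end{equation}
where the last equality uses the value convergence just established. Hence $x^\ast$ minimizes $J$.

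The only nontrivial ingredient is the extraction of $x^\ast$: one must combine equicoercivity (to put the almost-minimizers in a relatively compact set) with sequential compactness of the sublevel sets of $\Psi$. This is the only step that actually uses the topology of $\X$ beyond bookkeeping; everything else is a direct manipulation of the $\liminf$/$\limsup$ inequalities. Since the statement is a textbook result attributed to Braides, I would not attempt a genuinely new argument, but rather present the standard two-line proof above, emphasizing that equicoercivity is used exactly once, to convert the $\liminf$ condition into a statement about infima and about accumulation points of minimizers.
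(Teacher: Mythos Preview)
Your argument is the standard textbook proof and is correct. However, the paper does not give its own proof of this statement: Theorem~\ref{thm:FundGamma} is simply quoted from \cite{Braides02} with no proof environment following it, so there is nothing in the paper to compare your proposal against.
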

Although, Theorem~\ref{thm:FundGamma} and the next two paragraphs on $\Gamma$-convergence of the functional in \eqref{eq:ConstrProbMeas2} might appear quite abstract at first glance, they will ultimately enable us to prove continuity of the optimal solutions $f^*_\alpha$ for \eqref{eq:ConstrProb} with respect to $\alpha$ in our main Theorem~\ref{cor:ConvSol}.

\paragraph{Rich Regime}
Using $\Gamma$-convergence, we first investigate the case $\alpha \to \alpha_* \neq \infty$ and equip $\mathcal{M}^+ (\S^{p-1})$ with the usual weak topology.
\begin{proposition}\label{lem:Gamma0}
	For $\alpha \to \alpha_* \neq \infty$, we have $\Gamma$-convergence of the functionals in
	\begin{equation}\label{eq:ConstrProbMeas2.1}
		\min_{\hat \mu \in \mathcal{M}^+ (\S^{p-1})} \sum_{k=1}^n L\biggl(\int_{\S^{p-1}} \phi(\boldsymbol \theta,\vec x_k) \dx \hat \mu(\boldsymbol \theta),y_k\biggr) + \lambda (1 + \alpha^2) \widehat{W}_2^2\bigl(\hat \mu, \alpha^2 \hat \mu_0\bigr).
	\end{equation}
	Furthermore, the family of functionals in~\eqref{eq:ConstrProbMeas2.1} is equicoercive.
\end{proposition}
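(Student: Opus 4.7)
The plan is to split the functional as $J_\alpha(\hat\mu) = F(\hat\mu) + R_\alpha(\hat\mu)$, where $F$ is the $\alpha$-independent data-fitting term and $R_\alpha(\hat\mu) = \lambda(1+\alpha^2)\widehat W_2^2(\hat\mu,\alpha^2\hat\mu_0)$, and then to verify the two $\Gamma$-convergence conditions and equicoercivity by leveraging the metric structure of $\widehat W_2$ on $\mathcal M^+(\S^{p-1})$. Since $\S^{p-1}$ is compact and $\phi(\cdot,\vec x_k)\in C(\S^{p-1})$, each linear functional $\hat\mu\mapsto\int_{\S^{p-1}}\phi(\boldsymbol\theta,\vec x_k)\dx\hat\mu(\boldsymbol\theta)$ is continuous for the weak topology, and combined with lower semicontinuity of $L(\cdot,y_k)$ this makes $F$ weakly lower semicontinuous.

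For the regularizer $R_\alpha$, I would use that $\widehat W_2$ metrizes the weak convergence on $\mathcal M^+(\S^{p-1})$ (Thm.~3.6 of \cite{LMS2015}), so $\hat\mu_k\rightharpoonup\hat\mu$ implies $\widehat W_2(\hat\mu_k,\hat\mu)\to 0$. A direct scaling estimate obtained from formulation~\eqref{eq:HK-dist}, using the plan $(S_{\alpha_k},S_{\alpha_*})_\#\mu_0$ for any lift $\mu_0$ with $\Pi_2(\mu_0)=\hat\mu_0$, yields $\widehat W_2^2(\alpha_k^2\hat\mu_0,\alpha_*^2\hat\mu_0)\leq(\alpha_k-\alpha_*)^2\hat\mu_0(\S^{p-1})\to 0$. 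Combined with the triangle inequality for $\widehat W_2$ and continuity of the prefactor $(1+\alpha^2)$, this gives $R_{\alpha_k}(\hat\mu_k)\to R_{\alpha_*}(\hat\mu)$ whenever $\hat\mu_k\rightharpoonup\hat\mu$ and $\alpha_k\to\alpha_*$.

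Both $\Gamma$-convergence conditions now follow easily. The liminf inequality comes from weak lower semicontinuity of $F$ and continuity of $R_{\alpha_k}(\hat\mu_k)$, while the recovery sequence is the constant sequence $\hat\mu_k\equiv\hat\mu$, for which $F(\hat\mu_k)=F(\hat\mu)$ trivially and $R_{\alpha_k}(\hat\mu)\to R_{\alpha_*}(\hat\mu)$ by the continuity just established. For equicoercivity, I would use the cone identity $\widehat W_2^2(\hat\mu,0)=\hat\mu(\S^{p-1})$, which follows from~\eqref{eq:HK-dist} because $\Pi_2(\nu)=0$ forces $\nu=\delta_0$, so the minimum reduces to $\int\|\vec w\|^2\dx\mu=\hat\mu(\S^{p-1})$ for any lift $\mu$ of $\hat\mu$. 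The triangle inequality then gives
\[
\widehat W_2(\hat\mu,\alpha^2\hat\mu_0)\geq\Big(\sqrt{\hat\mu(\S^{p-1})}-\alpha\sqrt{\hat\mu_0(\S^{p-1})}\Big)_+,
\]
so for $\alpha$ in any bounded neighbourhood of $\alpha_*$ the sublevel sets of $J_\alpha$ are uniformly bounded in total mass, which by compactness of $\S^{p-1}$ are weakly relatively compact in $\mathcal M^+(\S^{p-1})$.

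The main technical obstacle is the joint continuity of $R_{\alpha_k}(\hat\mu_k)$ in both arguments, which rests on combining the metrization property of $\widehat W_2$ with the explicit scaling estimate above; once these are in place, both $\Gamma$-convergence and equicoercivity follow from standard compactness and continuity arguments in metric spaces.
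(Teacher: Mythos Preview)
Your proposal is correct and follows essentially the same approach as the paper: the paper also proves equicoercivity via the reverse triangle inequality and the cone identity $\widehat W_2(\hat\mu,0)=\sqrt{\hat\mu(\S^{p-1})}$, handles the data term via continuity of $\phi(\cdot,\vec x_k)$ on $\S^{p-1}$ and lower semicontinuity of $L$, controls the regularizer through a triangle-inequality estimate for $\widehat W_2(\alpha_k^2\hat\mu_0,\alpha_*^2\hat\mu_0)$ together with the fact that $\widehat W_2$ metrizes weak convergence, and takes the constant recovery sequence for the $\limsup$. Your observation that $R_{\alpha_k}(\hat\mu_k)\to R_{\alpha_*}(\hat\mu)$ is in fact a full continuity statement (rather than merely a $\liminf$ bound) is slightly cleaner than the paper's presentation, and your scaling estimate $\widehat W_2^2(\alpha_k^2\hat\mu_0,\alpha_*^2\hat\mu_0)\le(\alpha_k-\alpha_*)^2\hat\mu_0(\S^{p-1})$ via the lifted coupling $(S_{\alpha_k},S_{\alpha_*})_\#\mu_0$ is sharper than the paper's bound $|\alpha_k^2-\alpha_*^2|\,\hat\mu_0(\S^{p-1})$, but these are cosmetic differences within the same argument.
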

\begin{proof}
	We first note that the functionals in~\eqref{eq:ConstrProbMeas2.1} are equicoercive since
	\begin{equation}
		(1 + \alpha^2) \widehat{W}_2^2\bigl(\hat \mu, \alpha^2 \hat \mu_0\bigr) \geq \Bigl( \widehat{W}_2(\hat \mu, 0) -\widehat{W}_2\bigl(0, \alpha^2 \hat \mu_0\bigr)\Bigr)^2 = \bigl( \widehat{W}_2(\hat \mu, 0) - \alpha \sqrt{\hat \mu_0(\S^{p-1})}\bigr)^2
	\end{equation}
	and $L$ maps into $[0,\infty]$.
	For the $\liminf$ inequality, let $\{\hat \mu_k\}_{k \in \N}$ and $\{\alpha_k\}_{k \in \N}$ be sequences with limits $\hat \mu$ and $\alpha_*$, respectively.
	Since $\phi(\cdot,\vec x_l)$ is continuous, this directly implies that $\int_{\S^{p-1}} \phi(\boldsymbol \theta,\vec x_l) \dx \hat \mu_k(\boldsymbol \theta) \to \int_{\S^{p-1}} \phi(\boldsymbol \theta,\vec x_l) \dx \hat \mu(\boldsymbol \theta)$ for all $l=1,\ldots,n$.
	Then, since
	\begin{align}
		(1 + \alpha_k^2) \widehat{W}_2^2\bigl(\hat \mu_k, \alpha_k^2 \hat \mu_0\bigr) &\geq (1 + \alpha_k^2) \Bigl( \widehat{W}_2\bigl(\hat \mu_k, \alpha_*^2 \hat \mu_0\bigr) -\widehat{W}_2\bigl(\alpha_*^2 \hat \mu_0, \alpha_k^2 \hat \mu_0\bigr)\Bigr)^2\\
		& \geq (1 + \alpha_k^2) \Bigl( \widehat{W}_2\bigl(\hat \mu_k, \alpha_*^2 \hat \mu_0\bigr) - \sqrt{\vert \alpha_*^2 - \alpha_k^2 \vert \hat \mu_0(\S^{p-1})}\Bigr)^2,
	\end{align}
	the claim follows  by the continuity of \smash{$\widehat{W}_2$} and the lower-semicontinuity of $L(\cdot,y_l)$.
	Finally, the $\limsup$ inequality follows if we let the recovery sequence be constant.
\end{proof}
Note that for $\alpha = 0$, problem \eqref{eq:ConstrProbMeas2.1} can be rewritten as
\begin{equation}
	\min_{\hat \mu \in \mathcal{M}^+ (\S^{p-1})} \sum_{k=1}^n L\biggl(\int_{\S^{p-1}} \phi(\boldsymbol \theta,\vec x_k) \dx \hat \mu(\boldsymbol \theta),y_k\biggr) + \lambda \TV(\hat \mu) \label{eq:RichRegime}.
\end{equation}

\paragraph{Kernel Regime}
Next, we want to discuss the case $\alpha \to \infty$ and show that we approach the NTK problem \eqref{eq:neural_tangent_kernel}
with feature maps $\vec x \mapsto \nabla_{\vec w}  \phi(\cdot,\vec x)$ if we reformulate \eqref{eq:ConstrProbMeas2} accordingly.

\begin{proposition}\label{lem:GammaInf}
	Let $\mu_0$ be absolutely continuous with respect to the Lebesgue measure and $\int_{\R^p} \phi(\vec w,\cdot) \dx \mu_0(\vec w) = 0$.
	Further, let $ L(\cdot,y_k)$, $k=1,\ldots,n$, be either left- or right-continuous in every point of its domain.
	Then, for $\alpha \to \infty$, we have $\Gamma$-convergence of the functionals in
	\begin{equation}
		\argmin_{T \in L^2(\R^p, \mu_0)} \sum_{k=1}^n L\biggl( \int_{\R^p} \phi\bigl(\alpha \vec w+\alpha^{-1}T(\vec w),\vec x_k\bigr) \dx \mu_0(\vec w),y_k\biggr) + \lambda \frac{\alpha^2 + 1}{\alpha^2} \Vert T \Vert_{L^2(\R^p,\mu_0)}^2,\label{eq:ProbT}
	\end{equation}
	which is a reformulation of \eqref{eq:ConstrProbMeas2} using transport maps,
	to the one in
	\begin{equation}\label{eq:KernelSetting}
		\argmin_{T \in L^2(\R^p, \mu_0)} \sum_{i=k}^n L\bigl(\bigl\langle T, \nabla_{\vec w} \phi(\cdot,\vec x_k) \bigr\rangle_{L^2(\R^p, \mu_0)},y_k\bigr) + \lambda \Vert T \Vert_{L^2(\R^p,\mu_0)}^2
	\end{equation}
	with respect to the weak topology in $L^2(\R^p,\mu_0)$.
	Further, the functionals in~\eqref{eq:ProbT} are equicoercive.
\end{proposition}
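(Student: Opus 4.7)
The plan is to prove $\Gamma$-convergence via the two standard inequalities and then verify equicoercivity, with the core technical work being a careful expansion of the loss term exploiting the $2$-homogeneity of $\phi$. The starting point is the identity, valid for every $T\in L^2(\R^p,\mu_0)$,
\begin{equation*}
\int_{\R^p}\phi(\alpha\vec w+\alpha^{-1}T(\vec w),\vec x_k)\dx \mu_0(\vec w) = \alpha^2\int_{\R^p}\phi(\vec w,\vec x_k)\dx \mu_0 + \int_{\R^p}\int_0^1 \nabla_{\vec w}\phi(\vec w + t\alpha^{-2}T(\vec w),\vec x_k)^T T(\vec w)\dx t\dx \mu_0,
\end{equation*}
obtained from $2$-homogeneity of $\phi$, the fundamental theorem of calculus along lines for $W_{\text{loc}}^{1,\infty}$ functions, and the change of variable $s = t\alpha^{-2}$. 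The first summand vanishes by the standing hypothesis $\int\phi(\vec w,\cdot)\dx \mu_0 = 0$, unveiling the effective feature map $\nabla_{\vec w}\phi(\cdot,\vec x_k)$; the latter belongs to $L^2(\R^p,\mu_0)$ by $1$-homogeneity on $C_{\vec x_k}$ together with $\mu_0\in\mathcal{P}_2(\R^p)$.

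For the $\Gamma$-liminf inequality, assume $T_\alpha\weakly T$ in $L^2(\R^p,\mu_0)$, so that $\Vert T_\alpha\Vert_{L^2(\R^p,\mu_0)}$ stays bounded. Split the kernel term from the identity above as
\begin{equation*}
\int \nabla_{\vec w}\phi(\vec w,\vec x_k)^T T_\alpha\dx \mu_0 + \int\int_0^1 \bigl[\nabla_{\vec w}\phi(\vec w+t\alpha^{-2}T_\alpha,\vec x_k) - \nabla_{\vec w}\phi(\vec w,\vec x_k)\bigr]^T T_\alpha\dx t\dx \mu_0.
\end{equation*}
The first summand converges to $\langle T,\nabla_{\vec w}\phi(\cdot,\vec x_k)\rangle_{L^2(\R^p,\mu_0)}$ by weak convergence. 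For the second summand---the main obstacle---I would observe that $\alpha^{-2}T_\alpha\to 0$ strongly in $L^2(\R^p,\mu_0)$ (its norm is $O(\alpha^{-2})$), hence in $\mu_0$-measure, and extract a subsequence on which $\alpha^{-2}T_\alpha\to 0$ $\mu_0$-a.e. Since $C_{\vec x_k}$ is open and of full $\mu_0$-measure, for a.e.\ $\vec w$ the segment $\{\vec w+t\alpha^{-2}T_\alpha(\vec w):t\in[0,1]\}$ eventually lies in $C_{\vec x_k}$, where the uniform bound $M$ on $\Vert\nabla_{\vec w}^2\phi(\cdot,\vec x_k)\Vert$ yields $\Vert\nabla_{\vec w}\phi(\vec w+t\alpha^{-2}T_\alpha,\vec x_k)-\nabla_{\vec w}\phi(\vec w,\vec x_k)\Vert\leq M\alpha^{-2}\Vert T_\alpha(\vec w)\Vert$. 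Combining this pointwise bound with the $1$-homogeneous envelope $\Vert\nabla_{\vec w}\phi(\vec w,\vec x_k)\Vert\leq K\Vert\vec w\Vert$, Vitali's convergence theorem together with Cauchy--Schwarz yields that the second summand is $O(\alpha^{-2})$ and vanishes. Lower semicontinuity of $L(\cdot,y_k)$ then delivers the loss bound, while weak lower semicontinuity of $\Vert\cdot\Vert^2_{L^2(\R^p,\mu_0)}$ combined with $\frac{\alpha^2+1}{\alpha^2}\to 1$ handles the regularizer.

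The $\Gamma$-limsup inequality is obtained with the constant recovery sequence $T_\alpha\equiv T$: the Taylor-type identity with fixed $T$ and dominated convergence---using the same $1$-homogeneous envelope---show that the loss argument converges to $\langle T,\nabla_{\vec w}\phi(\cdot,\vec x_k)\rangle_{L^2(\R^p,\mu_0)}$, and the one-sided continuity assumption on $L(\cdot,y_k)$ upgrades this to convergence of the loss itself. Equicoercivity is immediate from $\lambda\frac{\alpha^2+1}{\alpha^2}\Vert T\Vert^2_{L^2(\R^p,\mu_0)}\geq\lambda\Vert T\Vert^2_{L^2(\R^p,\mu_0)}$, whose sublevel sets are bounded and therefore weakly relatively compact in the reflexive Hilbert space $L^2(\R^p,\mu_0)$. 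The hardest step is without doubt the control of the Taylor remainder in the liminf: the weak-only convergence of $T_\alpha$ precludes a direct dominated-convergence argument, and the singular behavior of $\phi$ off the open cone $C_{\vec x_k}$ forces the measure-theoretic Vitali/uniform-integrability route sketched above.
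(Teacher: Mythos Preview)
Your liminf and equicoercivity arguments are essentially the paper's: both of you extract an a.e.-convergent subsequence of $\alpha^{-2}T_\alpha$, bound the difference $\nabla_{\vec w}\phi(\vec w+t\alpha^{-2}T_\alpha,\vec x_k)-\nabla_{\vec w}\phi(\vec w,\vec x_k)$ using the $1$-homogeneous envelope as dominating function, and conclude via dominated convergence or Vitali. (Your claim that the remainder is $O(\alpha^{-2})$ is slightly too optimistic---the Hessian bound only kicks in once the whole segment lies in the open cone $C_{\vec x_k}$, and the threshold for that depends on $\vec w$---but you only need that it vanishes, which your argument does establish.)

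The genuine gap is in the $\limsup$ inequality. With the constant recovery sequence $T_\alpha\equiv T$ the loss argument converges to $\langle T,\nabla_{\vec w}\phi(\cdot,\vec x_k)\rangle_{L^2(\R^p,\mu_0)}$, but you do not control \emph{from which side}. One-sided continuity does not ``upgrade'' plain convergence to convergence of $L$: if $L(\cdot,y_k)$ is, say, only right-continuous at the limit (think of an indicator of a half-line) and the argument happens to approach from the left, then $\limsup L(\text{arg}_\alpha)$ can be $+\infty$. The lower semicontinuity of $L$ gives the wrong inequality here. The paper repairs exactly this point: it introduces a dual basis $D_l$ of the feature maps $\nabla_{\vec w}\phi(\cdot,\vec x_l)$, sets $M_{l,k}$ to be an explicit upper bound on the remainder, chooses $s_l=\pm 1$ according to whether $L(\cdot,y_l)$ is right- or left-continuous at the target value, and takes
\[
T_k = T + \sum_{l} s_l\, M_{l,k}\, D_l .
\]
Then $\langle T_k,\nabla_{\vec w}\phi(\cdot,\vec x_l)\rangle + R_{\vec x_l,\alpha_k}(T_k)$ lies on the correct side of $\langle T,\nabla_{\vec w}\phi(\cdot,\vec x_l)\rangle$ for every $l$, so the one-sided continuity hypothesis actually applies and yields the required $\limsup$ bound. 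Your proposal needs this (or an equivalent device) to close.
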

\begin{proof}
	Equicoercivity of the functionals in~\eqref{eq:ProbT} holds since \smash{$\Vert T \Vert_{L^2(\R^p,\mu_0)}^2$} is a lower bound for all of them.
	Due to the absolute continuity with respect to the Lesbegue measure, we can use the equivalent formulation~\eqref{eq:N} of the complexity measure in~\eqref{eq:N_relax} to obtain
	\begin{alignat}{1}
		&\min_{\mu \in \mathcal P_2 (\R^{p})} \sum_{k=1}^n L\biggl(\int_{\R^{p}} \phi(\vec w,\vec x_k) \dx \mu(\vec w),y_k\biggr) + \lambda(1 + \alpha^2) W_2^2(\mu, {S_\alpha}_\#\mu_0)\notag\\
		=&\min_{T \in L^2(\R^p, \mu_0)} \sum_{k=1}^n L\biggl(\int_{\R^p}\!\! \phi\bigl(\vec w,\vec x_k\bigr) \dx \bigl[{(S_\alpha + \alpha^{-1}T)}_\#\mu_0\bigr](\vec w),y_k\biggr) + \lambda \frac{\alpha^2 + 1}{\alpha^2} \Vert T \Vert_{L^2(\R^p,\mu_0)}^2\notag\\
		=&\min_{T \in L^2(\R^p, \mu_0)} \sum_{k=1}^n L\biggl( \int_{\R^p} \phi\bigl(\alpha \vec w+\alpha^{-1}T(\vec w),\vec x_k\bigr) \dx \mu_0(\vec w),y_k\biggr) + \lambda \frac{\alpha^2 + 1}{\alpha^2} \Vert T \Vert_{L^2(\R^p,\mu_0)}^2.
	\end{alignat}
	Now, since $\phi$ is twice continuously differentiable on $C_{\vec x_k}$, we get, for any $\vec w \in C_{\vec x_k}$, that
	\begin{equation}
		\phi(\vec w+ \vec h,\vec x_k) = \phi(\vec w,\vec x_k) + \bigl(\nabla_{\vec w} \phi(\vec w,\vec x_k)\bigr)^T \vec h + R(\vec w,\vec h,\vec x_k).
	\end{equation}
	As $t \mapsto \phi(\vec w+t\vec h,\vec x_k)$ is absolutely continuous, the remainder $R(\vec w,\vec h,\vec x_k)$ can be estimated for any $\vec w \in C_{\vec x_k}$ and $\vec h \in \R^p$ by 
	\begin{equation}
		|R(\vec w,\vec h,\vec x_k)| \leq \max_{\tilde {\vec w} \in B(\vec w, \Vert \vec h \Vert)} \bigl \Vert \nabla_{\vec w} \phi(\tilde {\vec w},\vec x_k) -\nabla_{\vec w} \phi(\vec w,\vec x_k) \bigr \Vert \Vert \vec h \Vert.
	\end{equation}
	If additionally $\vec h \in B(\vec w, \epsilon)$, where the radius $\epsilon$ depends on $\vec w$ and $\vec x_i$, we can use differentiability to even get
	\begin{equation}
		|R(\vec w,\vec h,\vec x_i)| \leq \sup_{\tilde {\vec w} \in B(\vec w,\epsilon)} \Vert \nabla^2_{\vec w} \phi(\tilde{\vec w},\vec x_i) \Vert \Vert \vec h \Vert^2.
	\end{equation}
	By defining the function
	\begin{equation}
		R_{\vec x_i,\alpha}(T) \coloneqq  \int_{\R^p} R(\alpha \vec w,T(\vec w)/\alpha,\vec x_i) \dx \mu_0(\vec w),
	\end{equation}
	we rewrite \eqref{eq:ProbT} for the following $\Gamma$-convergence discussion as
	\begin{equation}
		\min_{T \in L^2(\R^p, \mu_0)} \sum_{k=1}^n L\Bigl(\bigl\langle T, \nabla_{\vec w}  \phi(\cdot,\vec x_k) \bigr\rangle_{L^2(\R^p, \mu_0)} + R_{\vec x_i,\alpha}(T),y_k\Bigr) + \lambda \frac{\alpha^2 + 1}{\alpha^2} \Vert T \Vert_{L^2(\R^p,\mu_0)}^2.\label{eq:ReformLimitInfty}
	\end{equation}
	
	For the $\liminf$ inequality of $\Gamma$-convergence, let $\{T_k\}_{k \in \N}$ and $\{\alpha_k\}_{k \in \N}$ be (weakly) convergent sequences with limits $T$ and $\infty$, respectively.
	Since weakly convergent sequences are bounded, we get that $\Vert T_k \Vert_{L^2(\R^p,\mu_0)} /\alpha_k \to 0$.
	Hence, we can drop to a subsequence that satisfies $T_k(\vec w) /\alpha_k \to 0$ for $\mu_0$-a.e.\ $\vec w \in \R^p$, and there exists $g \in L^2(\R^p,\mu_0)$ with $\vert T_k(\vec w) /\alpha_k \vert \leq g(\vec w)$ for $\mu_0$-a.e.\ $\vec w \in \R^p$.
	Now, observe that
	\begin{align}
		\vert R_{\vec x_l,\alpha_k}(T_k) \vert & \leq  \int_{\R^p} \max_{\tilde{\vec w} \in B(\alpha_k \vec w, g(\vec w))} \! \Vert \nabla_{\vec  w} \phi(\tilde{\vec w},\vec x_l) -\nabla_{\vec w} \phi( \alpha_k \vec w,\vec x_l) \Vert \frac{\Vert T_k \Vert}{\alpha_k} \dx \mu_0(\vec w)\notag\\
		& \leq \biggl(\int_{\R^p} \max_{\tilde{\vec w} \in B(\vec w, \frac{g(\vec w)}{\alpha_k})} \!\Vert \nabla_{\vec w} \phi(\tilde{\vec w},\vec x_l) -\nabla_{\vec w} \phi(\vec w,\vec x_l) \Vert^2 \dx \mu_0(\vec w)\biggr)^{1/2} \! \Vert T_k \Vert_{L^2(\R^p,\mu_0)} \label{eq:est_rem}.
	\end{align}
	Here, the integrand converges pointwise to 0 for every $\vec w \in C_{\vec x_l}$, and can be bounded by
	\begin{align}
		&\max_{\tilde{\vec w} \in B(\vec w, g(\vec w)/\alpha_k)} \Vert \nabla_{\vec w} \phi(\tilde{\vec w},\vec x_l) -\nabla_{\vec w} \phi(\vec w,\vec x_l) \Vert\notag\\
		&\leq \max_{\tilde{\vec w} \in B(\vec w, g(\vec w)/\alpha_k)}  \Vert \tilde{\vec w} \Vert \Bigl\Vert \nabla_{\vec w} \phi\Bigl(\frac{\tilde{\vec w}}{\Vert \tilde{\vec w} \Vert},\vec x_l\Bigr)\Bigr\Vert + \Vert \vec w \Vert \Bigl\Vert \nabla_{\vec w} \phi\Bigl(\frac{\vec w}{\Vert \vec w \Vert},\vec x_l\Bigr) \Bigr\Vert\notag\\
		& \leq \Bigl(2\Vert \vec w \Vert + \frac{g(\vec w)}{\alpha_k}\Bigr) \max_{\tilde{\vec w} \in \S^p} \Vert \nabla_{\vec w} \phi(\tilde{\vec w},\vec x_l) \Vert.
	\end{align}
	From the dominated-convergence theorem, one has that $R_{\vec x_l,\alpha_k}(T_k) \to 0$.
	Given that $T_k \rightharpoonup T$, the $\liminf$ inequality now follows as
	\begin{align}
		&\sum_{l=1}^n \bigl(\bigl\langle T, \nabla_{\vec w}  \phi(\cdot,\vec x_l) \bigr\rangle_{L^2(\mu_0)},y_l\bigr) + \lambda \Vert T \Vert_{L^2(\R^p,\mu_0)}^2\notag\\
		\leq &\liminf_{k \to \infty} \sum_{l=1}^n L\bigl(\bigl\langle T_k, \nabla_{\vec w}  \phi(\cdot,\vec x_l) \bigr\rangle_{L^2(\R^p, \mu_0)} + R_{\vec x_l,\alpha_k}(T_k),y_l\bigr) +  \lambda \frac{\alpha_k^2 + 1}{\alpha_k^2} \Vert T_k \Vert_{L^2(\R^p,\mu_0)}^2.
	\end{align}
	
	For the $\limsup$ inequality, we can assume that $T$ has finite energy.
	Further, we use a dual basis $D_l \in L^2(\R^p,\mu_0)$, $l=1,\ldots,n$, of the feature maps $\nabla_{\vec w}  \phi(\cdot,\vec x_l)$. 
	Then, we define $h(\vec w) \coloneqq \vert T(\vec w) \vert + \sum_{l=1}^n \vert D_l(\vec w) \vert$ and
	\begin{align}
		M_{l,k} = \biggl(\int_{\R^p} \max_{\tilde{\vec w} \in B(\vec w, h(\vec w)/\alpha_k^2)} \Vert \nabla_{\vec w} \phi(\tilde{\vec w},\vec x_l) -\nabla_{\vec w} \phi(\vec w,\vec x_l) \Vert^2 \dx \mu_0(\vec w)\biggr)^{1/2} \Vert \vec h \Vert_{L^2(\R^p,\mu_0)}.
	\end{align}
	Now, set $s_l = 1$ if $L(\cdot,y_l)$ is right-continuous in $\langle T, \nabla_{\vec w}  \phi(\cdot,\vec x_l) \rangle_{L^2(\R^p, \mu_0)}$ and $s_l = -1$ if it is left-continuous.
	Finally, we pick $T_k = T + \sum_l s_l M_{l,k} D_l$ as recovery sequence.
	
	As in the first part of the proof, we can show that $M_{l,k} \to 0$ for $k \to \infty$.
	Hence, we can estimate as in~\eqref{eq:est_rem} and obtain that $R_{\vec x_l,\alpha_k}(T_k) \to 0$ for $k \to \infty$.
	In the right-continuous case, it holds that
	\begin{align}
		\bigl \langle T_k, \nabla_{\vec w}  \phi(\cdot,\vec x_l) \bigr \rangle_{L^2(\R^p, \mu_0)} + R_{\vec x_l,\alpha_k}(T_k) \geq \bigl\langle T, \nabla_{\vec w} \phi(\cdot,\vec x_l) \bigr\rangle_{L^2(\R^p, \mu_0)}.
	\end{align}
	For the left-continuous case, we get that
	\begin{equation}
		\bigl \langle T_k, \nabla_{\vec w}  \phi(\cdot,\vec x_l) \bigr\rangle_{L^2(\R^p, \mu_0)} + R_{\vec x_l,\alpha_k}(T_k) \leq \bigl\langle T, \nabla_{\vec w}  \phi(\cdot,\vec x_l) \bigr\rangle_{L^2(\R^p, \mu_0)}.
	\end{equation}
	Hence, we have for $l=1,\ldots,n$ that $\langle T_k, \nabla_{\vec w}  \phi(\cdot,\vec x_l) \rangle_{L^2(\R^p, \mu_0)} \to \langle T, \nabla_{\vec w}  \phi(\cdot,\vec x_l) \rangle_{L^2(\R^p, \mu_0)}$ from the required direction, which concludes the proof.
\end{proof}
\paragraph{Implications for $\Hyp$}
Assume that $\mu_0$ is absolutely continuous with respect to the Lebesgue measure.
Observe that Proposition~\ref{lem:Gamma0} and Theorem~\ref{thm:FundGamma} directly imply that the family of measures $\hat \mu^*_\alpha$, $\alpha \in (0,\infty)$, determined by ~\eqref{eq:ConstrProbMeas2} is continuous in the \smash{$\widehat W_2$} metric.
Further, for $\alpha \to 0$, these measures converge to some optimal solution $\hat \mu^*_0$ of~\eqref{eq:RichRegime}.
Finally, Proposition~\ref{lem:GammaInf} and Theorem~\ref{thm:FundGamma} imply that the the solutions $T^*_\alpha$ of~\eqref{eq:ProbT} converge to an optimal solution of~\eqref{eq:KernelSetting} in the weak $L^2$-topology.
Equivalently, we can state these observations in terms of the optimal NNs as
\begin{equation}\label{eq:OptFunc}
	f^*_\alpha = \int_{\S^{p-1}} \phi(\boldsymbol \theta,\cdot) \dx \hat \mu^*_\alpha(\boldsymbol \theta) = \int_{\R^p} \phi\bigl(\vec w \theta+\alpha^{-1}T^*_\alpha(\vec w),\cdot \bigr) \dx \mu_0(\vec w), \quad \alpha < \infty
\end{equation}
and
\begin{equation}
	f^*_\infty(\vec x) = \bigl\langle T_\infty^*, \nabla_{\vec w}  \phi(\cdot,\vec x)\bigr\rangle_{L^2(\R^p, \mu_0)},
\end{equation}
where $\mu^*_\alpha$, $T^*_\alpha$, and $T_\infty^*$ solve \eqref{eq:ConstrProbMeas2}, \eqref{eq:ProbT}, and \eqref{eq:KernelSetting}, respectively.
\begin{theorem}\label{cor:ConvSol}
	Assume that $\mu_0$ is absolutely continuous with respect to the Lebesgue measure.
	Then, the family $f^*_\alpha\colon \R^d \to \R$, $\alpha \in [0,\infty)$, of optimal solutions for~\eqref{eq:ConstrProb} is continuous with respect to the uniform norm on any compact set $K \subset \R^d$.
	Further, for $\alpha \to \infty$, we have that $f^*_\alpha \to f^*_\infty$ pointwise.
\end{theorem}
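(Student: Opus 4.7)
The approach combines the two $\Gamma$-convergence results (Propositions~\ref{lem:Gamma0} and~\ref{lem:GammaInf}) with Theorem~\ref{thm:FundGamma} to obtain convergence of minimizers in parameter space, and then upgrades this to uniform convergence of the associated network functions on compact sets via the equi-Lipschitz estimate from Section~\ref{secInfWideNetworks}.

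For $\alpha_* \in [0,\infty)$, fix a sequence $\alpha_k \to \alpha_*$ and consider the corresponding minimizers $\hat \mu^*_{\alpha_k} \in \mathcal{M}^+(\S^{p-1})$ of~\eqref{eq:ConstrProbMeas2}. Proposition~\ref{lem:Gamma0} together with Theorem~\ref{thm:FundGamma} yields that, up to extraction, $\hat \mu^*_{\alpha_k}$ converges weakly to some minimizer $\hat \mu^*$ of the $\alpha_*$-limit problem. For $\alpha_* > 0$, the associated function is unique by strict convexity of $W_2^2(\cdot,{S_{\alpha_*}}_\#\mu_0)$, so $f^*_{\alpha_*} = \int_{\S^{p-1}} \phi(\boldsymbol\theta,\cdot) \dx \hat \mu^*(\boldsymbol\theta)$ is independent of the extraction; at $\alpha_* = 0$ one designates this limit as $f^*_0$. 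Since $\phi(\cdot,\vec x) \in C(\S^{p-1})$ for every $\vec x \in \R^d$, weak convergence already gives pointwise convergence $f^*_{\alpha_k}(\vec x) \to f^*_{\alpha_*}(\vec x)$.

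To upgrade to uniform convergence on a compact $K \subset \R^d$, I would invoke the Lipschitz estimate $|f(\vec x) - f(\vec y)| \leq C \hat \mu(\S^{p-1}) \Vert \vec x - \vec y \Vert$ from Section~\ref{secInfWideNetworks}, using $\int_{\R^p}\Vert \vec w\Vert^2 \dx \mu(\vec w) = \hat \mu(\S^{p-1})$. The total masses $\hat \mu^*_{\alpha_k}(\S^{p-1})$ are uniformly bounded: comparing against a fixed admissible measure shows that the optimal objective values are bounded along $k$, and the equicoercive lower bound established in the proof of Proposition~\ref{lem:Gamma0} turns this into a uniform bound on $\widehat W_2(\hat \mu^*_{\alpha_k},0)^2 = \hat \mu^*_{\alpha_k}(\S^{p-1})$. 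The family $\{f^*_{\alpha_k}\}$ is therefore equi-Lipschitz, so Arzelà--Ascoli combined with the pointwise convergence above yields uniform convergence on $K$.

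For $\alpha_* = \infty$, apply Proposition~\ref{lem:GammaInf} and Theorem~\ref{thm:FundGamma} in $L^2(\R^p,\mu_0)$ with the weak topology to obtain $T^*_{\alpha_k} \weakly T^*_\infty$, where $T^*_\infty$ is the unique minimizer of~\eqref{eq:KernelSetting}. From the representation~\eqref{eq:OptFunc}, the $2$-homogeneity of $\phi$, and the Taylor expansion used in the proof of Proposition~\ref{lem:GammaInf}, one has
\begin{equation*}
f^*_{\alpha_k}(\vec x) = \alpha_k^2 \int_{\R^p} \phi(\vec w,\vec x) \dx \mu_0(\vec w) + \bigl\langle T^*_{\alpha_k}, \nabla_{\vec w}\phi(\cdot,\vec x)\bigr\rangle_{L^2(\R^p,\mu_0)} + R_{\vec x,\alpha_k}(T^*_{\alpha_k}).
\end{equation*}
The first term vanishes by the assumption $\int_{\R^p} \phi(\vec w,\cdot) \dx \mu_0(\vec w) = 0$, the second converges to $f^*_\infty(\vec x)$ by weak $L^2$-convergence, and the remainder tends to zero by the dominated-convergence argument already carried out in the proof of Proposition~\ref{lem:GammaInf}, since $\{T^*_{\alpha_k}\}$ is $L^2$-bounded. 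This delivers the pointwise convergence $f^*_{\alpha_k}(\vec x) \to f^*_\infty(\vec x)$. The main obstacle throughout is the passage from weak convergence in parameter space to stronger function-space convergence; in the finite-$\alpha_*$ case it is handled by the equi-Lipschitz argument, while in the infinite case uniform convergence on compacts is not claimed because the remainder $R_{\vec x,\alpha_k}$ is controlled only pointwise in $\vec x$.
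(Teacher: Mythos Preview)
Your proposal is correct and follows essentially the same route as the paper: weak convergence of $\hat\mu^*_{\alpha_k}$ from Proposition~\ref{lem:Gamma0} and Theorem~\ref{thm:FundGamma} gives pointwise convergence of $f^*_{\alpha_k}$, the uniform mass bound yields equi-Lipschitzness and hence local uniform convergence, and for $\alpha\to\infty$ you unpack the Taylor-remainder argument from the proof of Proposition~\ref{lem:GammaInf} at an arbitrary $\vec x$. The paper's own proof is terser (it gets the mass bound directly from weak convergence rather than via equicoercivity, and for $\alpha\to\infty$ it simply refers back to Proposition~\ref{lem:GammaInf}), but the substance is the same; your version is in fact more explicit about the subsequence/uniqueness issue and about why the remainder vanishes at points other than the training samples.
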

\begin{proof}
	Let $\alpha_k \to \alpha \in  [0,\infty)$ with $\alpha_k \neq 0$.
	From~\eqref{eq:OptFunc} and the weak convergence of the $\hat \mu^*_{\alpha_k}$, we get that the sequence $f^*_{\alpha_k}$ is pointwise-convergent. 
	Further, recall that all $f^*_{\alpha_k}$ are Lipschitz-continuous with constant $C \hat \mu^*_{\alpha_k}(\S^{p-1})$.
	Since weakly convergent sequences have bounded measures, the $f^*_{\alpha_k}$ are uniformly Lipschitz-continuous.
	Hence, we conclude that $\Vert f^*_{\alpha_k} - f^*_\alpha \Vert_{C(K)} \to 0$ for any compact set $K \subset \R^d$.
	For the case $\alpha_k \to \infty$, we have already shown in the proof of Proposition~\ref{lem:GammaInf} that the sequence $f^*_{\alpha_k}$ converges pointwise to $f^*_{\infty}$.
\end{proof}

\section{Path Comparison at Final States}\label{Sec:NumExample}
To illustrate our theoretical observations, we investigate a 2D interpolation problem with samples $(\tilde{\vec x}_k,y_k) \in [-1,1]^2 \times \{-1,1\}$, $k=1,\ldots,10$, which are depicted in Figure~\ref{fig:data}.
\begin{figure}[t]
	\centering
	\includegraphics[width=0.45\textwidth]{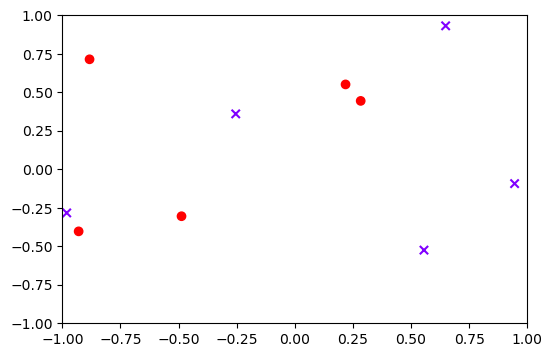}
	\caption{Data for Problem \eqref{eq:Interp_prob} (crosses corresponds to $y_k=-1$, circles to $y_k=1$).}\label{fig:data}
\end{figure}
Note that we have chosen to investigate an interpolation problem as it describes the end states of both the static and the dynamic paths.
As discussed in Remark~\ref{rem:ReLU}, we modify the $x$-component of these samples to $\vec x_k = (\tilde{\vec x}_k,1) \in \R^3$ in order to use a 2-homogeneous infinite-width 2-layer ReLU NN model with parameterization function $\phi_{\mathrm{ReLU}}\colon \R^4 \times \R^3 \to \R$ given by
\begin{equation}
	\phi_{\mathrm{ReLU}}(\vec w, \vec x) = w_1 \text{ReLU}(\vec w_2^T \vec x),
\end{equation}
where $\vec w = (w_1, \vec w_2) \in \R \times \R^{3}$.
Now, our goal is to find a probability measure $\mu \in \mathcal P_2(\R^4)$ such that 
\begin{equation}\label{eq:Interp_prob}
	\int_{\R^{4}}\phi_{\mathrm{ReLU}}\bigl(\vec w,\vec x_k\bigr) \dx \mu(\vec w) = y_k, \qquad k=1,\ldots,n.
\end{equation}
By using atomic measures, any finite-width 2-layer ReLU NN $\Psi(\vec x) = \sum_{k=1}^n w_k (\vec v_k^T \vec  x)^+$ with $w_k \in \R$ and $\vec v_k \in \R^3$ is covered by this formulation.
Further, \eqref{eq:Interp_prob} can be recast as the search for a measure $\hat \mu \in \mathcal M^+(\S^{3})$ (see \eqref{eq:equiv_constr}).
But, even then,~\eqref{eq:Interp_prob} is in general under-determined and we need to employ some kind of explicit or implicit regularization in order to ensure \emph{nice} solutions.

\subsection{Scaling Path}\label{sec:VarProb}
First, we look into the solution of the variational problem~\eqref{eq:ConstrProbMeas2} which, for the described interpolation setting, reads
\begin{equation}\label{eq:InterpolationProb}
	\argmin_{\hat \mu \in \mathcal{M}^+ (\S^{3})} \widehat{W}_2^2\bigl(\hat \mu, \alpha^2 \hat \mu_0\bigr) \quad \text{s.t. } \int_{\S^{3}} \phi_{\mathrm{ReLU}}(\boldsymbol \theta,\vec x_k) \dx \hat \mu(\boldsymbol \theta) = y_k, \quad k=1,\ldots,10,
\end{equation}
with $\boldsymbol \theta = (\theta_1, \boldsymbol \theta_2) \in \S^3 \subset \R^4$.
The initialization $\hat \mu_0$ is chosen as the uniform measure on $P = \{\pm 1/\sqrt 2\} \times \S^2/\sqrt 2 \subset \S^3$.
The choice of a uniform measure on $P$ instead of $\S^3$ is motivated, on the one hand, by the training dynamics and, on the other hand, by the initialization of the dynamic viewpoint based on the gradient flow \eqref{eq:GF} investigated in Section~\ref{sec:GradDynamics}.
To compute \smash{$\widehat{W}_2^2$}, we make use of the formulation~\eqref{eq:calib_plan}.
More precisely, this corresponds to the unbalanced optimal transport
\begin{equation}
	\widehat{W}_2(\hat \mu_1, \hat \mu_2)^2 = \min_{\gamma  \in \mathcal M^+(\S^{3}\times \S^{3})} \int_{\S^{3}\times \S^{3}}
	c \dx \gamma  +
	\sum^2_{i=1}
	\text{KL}({\pi_i}_\sharp \gamma, \hat \mu_i),\label{eq:UnbalancedOT}
\end{equation}
where
\begin{equation}
	c(\boldsymbol \theta_1,\boldsymbol \theta_2) =
	\begin{cases}
		-2 \log(\boldsymbol \theta_1^T \boldsymbol \theta_2) & \mbox{if } \boldsymbol \theta_1^T \boldsymbol \theta_2 >0,\\
		\infty & \mbox{else.}
	\end{cases}
\end{equation}
Problem~\eqref{eq:InterpolationProb} is an infinite-dimensional convex-optimization problem. To make it computationally tractable, we discretize the spheres $\S^2$ in $P$ using a Fibonacci grid $\vec F$ with $250^2$ points \cite{SwiJam06}.
The corresponding discrete version of $P$ and the measure $\hat \mu_0$ are denoted by $\vec P$ and $\hat{\boldsymbol \mu}_0$, respectively.
Additionally, we discretize the search space for $\mu$ based on the Fibonacci grid $\vec F$ as
\begin{equation}
	\vec Q = \biggl\{\frac{1}{7 \sqrt{2}}\Bigl(\pm k, \bigl(98 - k^2\bigr)^{1/2} \vec x\Bigr): 1 \leq k \leq 9, \vec x \in \vec F \biggr\} \subset \S^3.
\end{equation}
The coarser discretization in the first coordinate of the set $\vec Q$ is motivated by the fact that $\phi_{\mathrm{ReLU}}((w_1,\vec w_2),\vec x) = \phi_{\mathrm{ReLU}}((w_1/\vert w_1 \vert,\vec w_2\vert w_1 \vert),\vec x)$, namely, that the model is considerably over-parameterized.
This choice also ensures that $\vec P \subset \vec Q$.
Now, we obtain a discrete convex problem involving the unbalanced optimal-transport distance \smash{$\widehat{W}_2$}, which is still computationally challenging due to its large size.
Therefore, we resort to an entropy-regularized distance \smash{$\widehat{W}_{2,\epsilon}$} (see \cite{FSVATP2018}) instead of the original formulation \eqref{eq:UnbalancedOT}.
The divergence \smash{$\widehat{W}_{2,\epsilon}$} can be computed efficiently through the Sinkhorn algorithm, and its' gradients can be computed using algorithmic differentiation.
For small regularization parameters such as $\epsilon=\num{1e-2}$, the approximation \smash{$\widehat{W}_{2,\epsilon}$} is reasonably close to the original \smash{$\widehat{W}_2$} distance \cite{FSVATP2018,NS20}.
Finally, we arrive at the fully discrete problem
\begin{equation}\label{eq:TraingDiscrete}
	\argmin_{\hat{\boldsymbol \mu} \in \mathcal{M}^+ (\vec Q)} \widehat{W}_{2,\epsilon}^2\bigl(\hat{\boldsymbol \mu}, \alpha^2 \hat{\boldsymbol \mu}_0\bigr) \quad \text{s.t. } \int_{ \vec Q} \phi_{\mathrm{ReLU}}(\boldsymbol \theta,\vec x_k) \dx \hat{\boldsymbol \mu}(\boldsymbol \theta) = y_k, \quad k=1,\ldots,10,
\end{equation}
which amounts to the minimization of a differentiable convex objective subject to linear equality constraints.
Such problems can be solved, for example, with the forward-backward splitting \cite{CW05}.
To ensure fast convergence, we couple this method with a spectral step-size predictor and an Armijo linesearch to ensure convergence as detailed in \cite{GolStu2014}.
To evaluate \smash{$\widehat{W}_{2,\epsilon}^2\bigl(\cdot, \alpha^2 \hat{\boldsymbol \mu}_0\bigr)$} and its gradients, we make use of the \emph{geomloss} package\footnote{\href{https://www.kernel-operations.io/geomloss/}{https://www.kernel-operations.io/geomloss/}}.
Our numerical results for various values of $\alpha$ (including the limiting cases $\alpha=0$ and $\alpha= \infty$) are depicted in Figure~\ref{fig:interpolation}.
We clearly observe that a larger regularization scale $\alpha$ leads to smoother solutions.
Additionally, we observe that the $f_\alpha^*$ converge visually for $\alpha \to 0$ and $\alpha \to \infty$, as predicted by Corollary~\ref{cor:ConvSol}.
The corresponding functional values multiplied by the correct scaling $1+\alpha^2$ can be found in Table~\ref{tab:interpolation}.
For the NTK setting, the optimal value corresponding to~\eqref{eq:KernelSetting} is $\num{2.83e2}$. 
Again, we observe convergence of \smash{$(1+\alpha^2)\widehat{W}_{2,\epsilon}^2\bigl( \hat{\boldsymbol \mu}^*_\alpha, \alpha^2 \hat{\boldsymbol \mu}_0\bigr)$}, as predicted by Propositions~\ref{lem:Gamma0} and~\ref{lem:GammaInf}, and Theorem~\ref{thm:FundGamma}.
\setlength{\tabcolsep}{1pt}
\begin{figure}[t]
	\centering
	\begin{tabular}{cccc}
		$\alpha=0$  &  $\alpha=10^{-1}$   & $\alpha=10^{0}$ \\
		\includegraphics[width=0.31\textwidth]{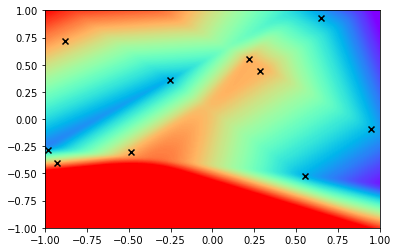} & 
		\includegraphics[width=0.31\textwidth]{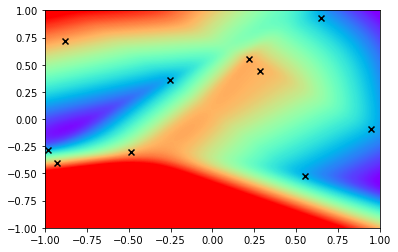} &
		\includegraphics[width=0.31\textwidth]{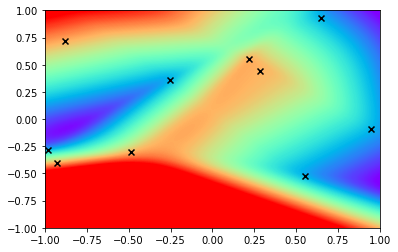} &
		\includegraphics[width=0.045\textwidth]{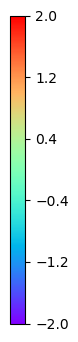}\\
		$\alpha=10^{1}$  &  $\alpha=10^{2}$ & $\alpha=\infty$ \\
		\includegraphics[width=0.31\textwidth]{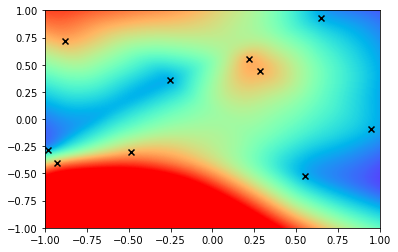} &
		\includegraphics[width=0.31\textwidth]{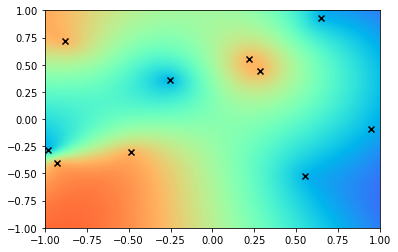} & 
		\includegraphics[width=0.31\textwidth]{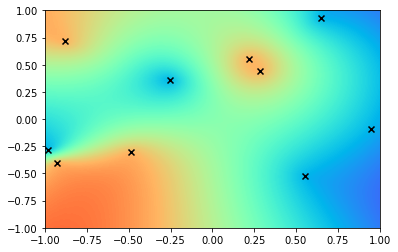} &
		\includegraphics[width=0.045\textwidth]{Images/colorbar}
	\end{tabular}
	\caption{Solutions of~\eqref{eq:TraingDiscrete} for several values of $\alpha$. The range is clipped to $[-2,2]$.}
	\label{fig:interpolation}
\end{figure}

\begin{remark}
	In principal, \eqref{eq:Interp_prob} is still over-parameterized, even in the form \eqref{eq:equiv_constr}.
	Essentially, it suffices to consider $\mathcal M^+(\{\pm 1/\sqrt 2\} \times \S^2/\sqrt 2) \subset \mathcal M^+(\S^3)$ in~\eqref{eq:equiv_constr} to realize any NN.
	This has the advantage that we only need to optimize over two 2D measures instead of a 3D one, which considerably reduces the computation time.
	Unfortunately, there is no theoretical guarantee that the optimal measures $\hat{\boldsymbol \mu}^*_\alpha$ must be supported on $\vec P$.
	However, we observed numerically that the assumption that $\supp(\hat{\boldsymbol \mu}^*_\alpha) \subset P$ leads essentially to the same results (Figure~\ref{fig:interpolation} and Figure~\ref{fig:pathcomparison}).
	Therefore, we propose to replace $\mathcal{M}^+ (\vec Q)$ by $\mathcal{M}^+ (\vec P)$ in \eqref{eq:TraingDiscrete} to decrease the computational cost.
\end{remark}

\subsection{Dynamic Viewpoint Based on Gradient Descent}\label{sec:GradDynamics}
Next, we illustrate the implicit regularizing effect of gradient descent training for the loss
\begin{equation}\label{eq:SquaredLoss}
	\sum_{k=1}^{10} \biggl \vert \frac{1}{2\cdot250^2}\sum_{l=1}^{2\cdot250^2} \beta^2  \phi_{\mathrm{ReLU}}(\vec w_l,\vec x_k) - y_k \biggr \vert^2,
\end{equation}
with $\vec w_l = (w_{1,l},\vec w_{2,l}) \in \R^4$.
To make a link with our approach in Section~\ref{sec:VarProb}, the $\vec w_l$ are initialized as the points from $\vec P$.
Depending on the initialization scale $\beta$ in~\eqref{eq:SquaredLoss}, gradient-descent training leads to very different results, as discussed in~\cite{chizat2019lazy, woodworth2020kernel}.
For all parameters $\beta$, we have chosen a sufficiently small stepsize and iterated gradient descent until convergence.
The obtained empirical measure corresponding to the scale $\beta$ is denoted by $\hat{\boldsymbol \nu}^*_\beta$.

A natural question is to investigate how the solutions induced by $\hat{\boldsymbol \nu}^*_\beta$ compare to the ones induced by $\hat{\boldsymbol \mu}^*_\alpha$.
A visual comparison  is provided in Figure~\ref{fig:pathcomparison}.
\begin{figure}[hp]
	\centering
	\begin{tabularx}{\textwidth}{Xcccc}
		\strut{} & $\alpha=\beta=\num{1e0}$ & $\alpha=\beta=\num{3.3e0}$  & $\alpha=\beta=\num{6.6e0}$ & \strut{}\\
		VP & \multicolumn{1}{m{.29\textwidth}}{\includegraphics[width=0.29\textwidth]{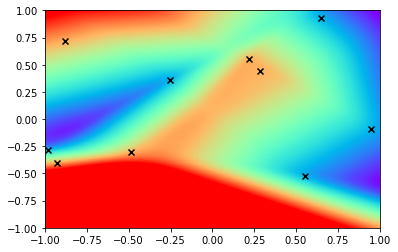}} & 
		\multicolumn{1}{m{.29\textwidth}}{\includegraphics[width=0.29\textwidth]{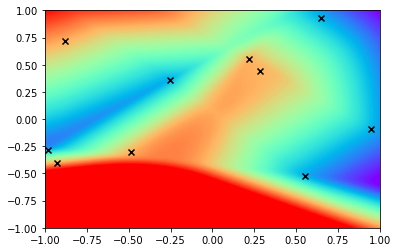}} &
		\multicolumn{1}{m{.29\textwidth}}{\includegraphics[width=0.29\textwidth]{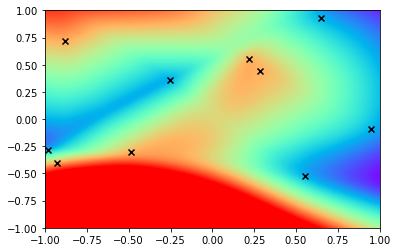}} &
		\multicolumn{1}{m{.042\textwidth}}{\includegraphics[width=0.042\textwidth]{Images/colorbar}}\\
		GD & \multicolumn{1}{m{.29\textwidth}}{\includegraphics[width=0.29\textwidth]{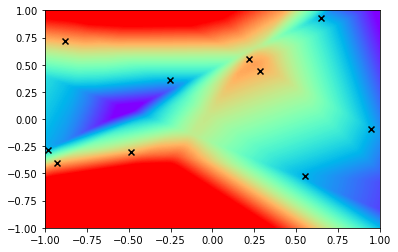}} & 
		\multicolumn{1}{m{.29\textwidth}}{\includegraphics[width=0.29\textwidth]{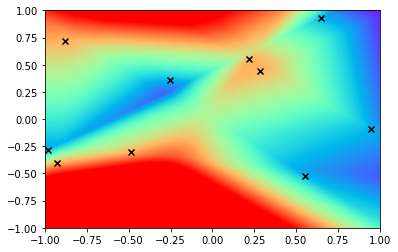}} &
		\multicolumn{1}{m{.29\textwidth}}{\includegraphics[width=0.29\textwidth]{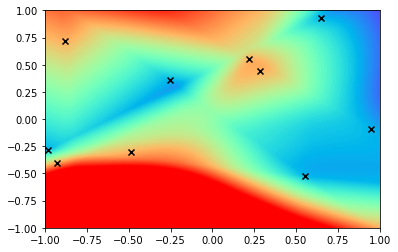}}&
		\multicolumn{1}{m{.042\textwidth}}{\includegraphics[width=0.042\textwidth]{Images/colorbar}}\\[.5ex]
		\strut{} & $\alpha=\beta=\num{1e1}$  &  $\alpha=\beta=\num{1.6e1}$   & $\alpha=\beta=\num{2.4e1}$ & \strut{}\\
		VP & \multicolumn{1}{m{.29\textwidth}}{\includegraphics[width=0.29\textwidth]{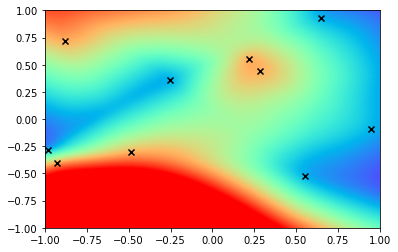}} & 
		\multicolumn{1}{m{.29\textwidth}}{\includegraphics[width=0.29\textwidth]{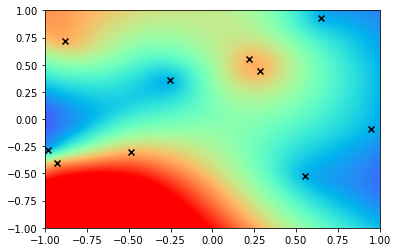}}&
		\multicolumn{1}{m{.29\textwidth}}{\includegraphics[width=0.29\textwidth]{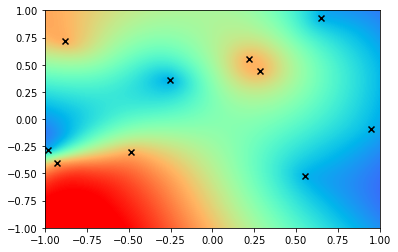}} &
		\multicolumn{1}{m{.042\textwidth}}{\includegraphics[width=0.042\textwidth]{Images/colorbar}}\\
		GD &\multicolumn{1}{m{.29\textwidth}}{\includegraphics[width=0.29\textwidth]{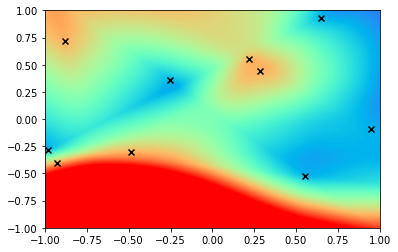}} & 
		\multicolumn{1}{m{.29\textwidth}}{\includegraphics[width=0.29\textwidth]{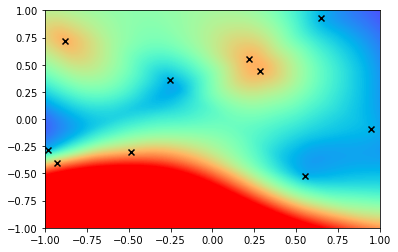}} &
		\multicolumn{1}{m{.29\textwidth}}{\includegraphics[width=0.29\textwidth]{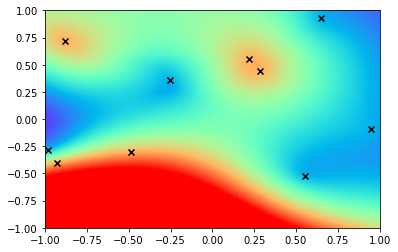}} &
		\multicolumn{1}{m{.042\textwidth}}{\includegraphics[width=0.042\textwidth]{Images/colorbar}}\\[.5ex]
		\strut{} & $\alpha=\beta=\num{4.2e1}$  &  $\alpha=\beta=\num{4.8e1}$   & $\alpha=\beta=\num{6.6e1}$ & \strut{}\\
		VP & \multicolumn{1}{m{.29\textwidth}}{\includegraphics[width=0.29\textwidth]{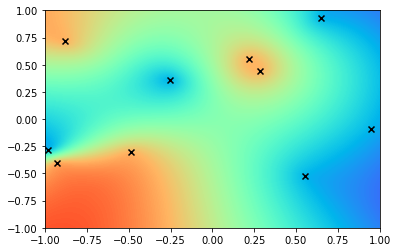}} & 
		\multicolumn{1}{m{.29\textwidth}}{\includegraphics[width=0.29\textwidth]{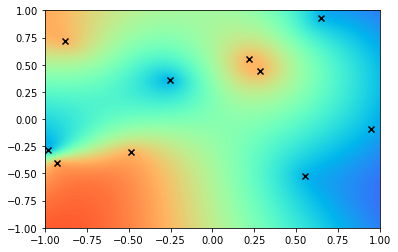}} &
		\multicolumn{1}{m{.29\textwidth}}{\includegraphics[width=0.29\textwidth]{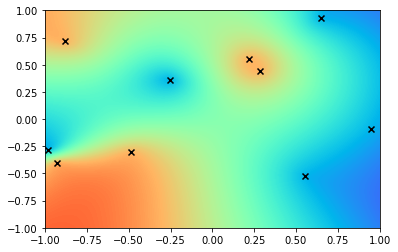}} &
		\multicolumn{1}{m{.042\textwidth}}{\includegraphics[width=0.042\textwidth]{Images/colorbar}}\\
		GD &\multicolumn{1}{m{.29\textwidth}}{\includegraphics[width=0.29\textwidth]{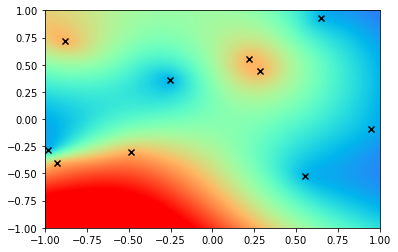}}& 
		\multicolumn{1}{m{.29\textwidth}}{\includegraphics[width=0.29\textwidth]{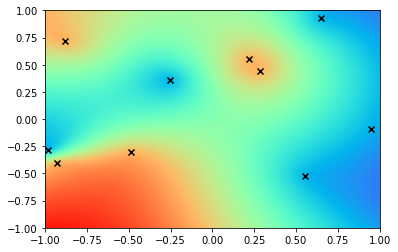}}&
		\multicolumn{1}{m{.29\textwidth}}{\includegraphics[width=0.29\textwidth]{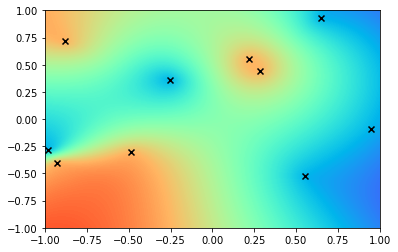}} &
		\multicolumn{1}{m{.042\textwidth}}{\includegraphics[width=0.042\textwidth]{Images/colorbar}}
	\end{tabularx}
	\caption{Optimization paths for gradient-descent training (GD) and the solutions of \eqref{eq:TraingDiscrete} (VP) restricted to $\mathcal{M}^+ (\vec P)$.
		The plots are clipped to $[-2,2]$.}
	\label{fig:pathcomparison}
\end{figure}
For larger values of $\alpha$ and $\beta$, the solutions corresponding to the same values are very similar.
As predicted by our theory, the solutions induced by $\hat{\boldsymbol \mu}^*_\alpha$ indeed approach $f^*_\infty$ associated to the kernel formulation~\eqref{eq:KernelSetting} for $\alpha \to \infty$.
The same behavior was predicted for the solutions corresponding to $\hat{\boldsymbol \nu}^*_\beta$ in~\cite{chizat2019lazy}.
Although the solutions start to differ for decreasing values of $\alpha$ and $\beta$, the path itself remains similar.
The path becomes significantly different only for small values of $\alpha$ and $\beta$.
However, for increasing width, the limits $\alpha\to 0$ and $\beta\to0$ both lead to solutions of \eqref{eq:RichRegime}.
Aside from this visual analysis, we can also examine the values of \smash{$\widehat{W}_{2,\epsilon}^2(\hat{\boldsymbol \nu}^*_\beta, \alpha^2 \hat{\boldsymbol \mu}_0) - \widehat{W}_{2,\epsilon}^2(\hat{\boldsymbol \mu}^*_\alpha, \alpha^2 \hat{\boldsymbol \mu}_0)$}.
A heat map is provided in Figure~\ref{fig:energy}, and the exact values are given in Table~\ref{tab:interpolation}.
Although not necessarily contained in the optimization domain $\mathcal M^+(\vec P)$ of \eqref{eq:TraingDiscrete}, the gradient-descent-based solutions $\hat{\boldsymbol \nu}^*_\alpha$ usually have higher functional values than their variational counterparts $\hat{\boldsymbol \mu}^*_\alpha$.
Moreover, the minimal value of \smash{$\widehat{W}_{2,\epsilon}^2(\hat{\boldsymbol \nu}^*_\beta, \alpha^2 \hat{\boldsymbol \mu}_0) - \widehat{W}_{2,\epsilon}^2(\hat{\boldsymbol \mu}^*_\alpha, \alpha^2 \hat{\boldsymbol \mu}_0)$} for large and fixed $\alpha$ is always obtained for $\beta = \alpha$.
For smaller initialization scales $\alpha$, the values are very close and $\beta = \alpha$ is close to being optimal.
\begin{figure}[t]
	\centering
	\includegraphics[width=0.4\textwidth]{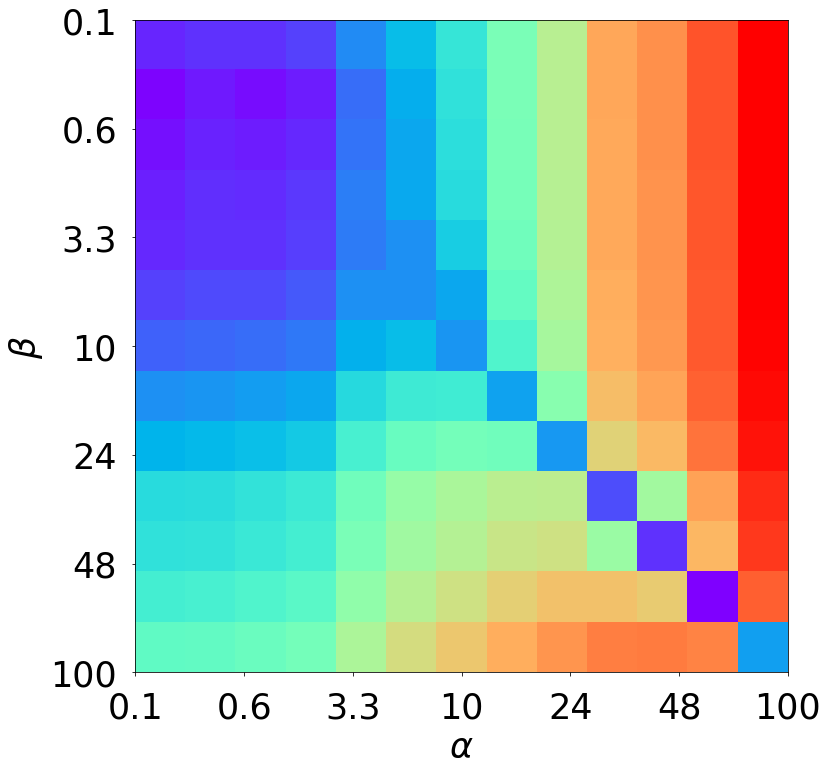} 
	\includegraphics[width=0.059\textwidth]{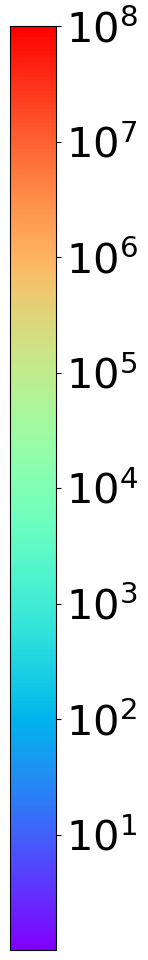} 
	\caption{Heat map of \smash{$\widehat{W}_{2,\epsilon}^2(\hat{\boldsymbol \nu}^*_\beta, \alpha^2 \hat{\boldsymbol \mu}_0) - \widehat{W}_{2,\epsilon}^2(\hat{\boldsymbol \mu}^*_\alpha, \alpha^2 \hat{\boldsymbol \mu}_0)$}.}
	\label{fig:energy}
\end{figure}
\setlength{\tabcolsep}{4.5pt}
\begin{table}
	\centering
	\begin{tabular}{c|c|c|c|c|c|c}
		\hline
		\hline
		$\alpha$ & $\num{1e-1}$ & $\num{3.3e-1}$ & $\num{6.6e-1}$ & $\num{1e0}$ & $\num{3.3e0}$ & $\num{6.6e0}$\tabularnewline  \hline
		Min.\ of~\eqref{eq:TraingDiscrete}& $\num{7.50e1}$ & $\num{7.44e1}$ & $\num{9.64e1}$ & $\num{1.26e2}$ & $\num{4.47e2}$ & $\num{7.73e2}$\tabularnewline \hline
		$\beta=\num{1e-1}$ & {\color{blue} $\mathbf{\num{9.36e1}}$} &  $\num{9.87e1}$ & $\num{1.20e2}$ & $\num{1.58e2}$ & $\num{6.19e2}$ & $\num{1.41e3}$\tabularnewline 
		$\beta=\num{3.3e-1}$ & $\num{8.44e1}$ &  {\color{blue} $\mathbf{\num{8.88e1}}$} & $\num{1.08e2}$ & $\num{1.41e2}$ & $\num{5.33e2}$ & $\num{1.17e3}$\tabularnewline 
		$\beta=\num{6.6e-1}$ & $\num{8.73e1}$ &  $\num{9.18e1}$ & {\color{blue} $\mathbf{\num{1.11e2}}$} & $\num{1.45e2}$ & $\num{5.43e2}$ & $\num{1.12e3}$\tabularnewline 
		$\beta=\num{1e0}$ & $\num{9.18e1}$ &  $\num{9.66e1}$ & $\num{1.17e2}$ & {\color{blue}$\mathbf{\num{1.53e2}}$} & $\num{5.70e2}$ & $\num{1.12e3}$\tabularnewline 
		$\beta=\num{3.3e0}$ & $\num{9.45e1}$ &  $\num{9.93e1}$ & $\num{1.20e2}$ & $\num{1.57e2}$ & {\color{blue}$\mathbf{\num{5.66e2}}$} & $\num{9.68e2}$\tabularnewline 
		$\beta=\num{6.6e0}$ & $\num{1.07e2}$ &  $\num{1.13e2}$ & $\num{1.37e2}$ & $\num{1.79e2}$ & $\num{6.43e2}$ & {\color{blue} $\mathbf{\num{9.60e2}}$}\tabularnewline 
		$\beta=\num{1e1}$ & $\num{1.39e2}$ &  $\num{1.47e2}$ & $\num{1.80e2}$ & $\num{2.37e2}$ & $\num{8.95e2}$ & $\num{1.41e3}$\tabularnewline 
		$\beta=\num{1.6e1}$ & $\num{2.70e2}$ &  $\num{2.88e2}$ & $\num{3.57e2}$ & $\num{4.77e2}$ & $\num{2.05e3}$ & $\num{4.36e3}$\tabularnewline 
		$\beta=\num{2.4e1}$ & $\num{5.82e2}$ & $\num{6.27e2}$ &  $\num{7.89e2}$ & $\num{1.06e3}$ & $\num{5.14e3}$ & $\num{1.36e4}$\tabularnewline 
		$\beta=\num{4.2e1}$ & $\num{1.77e3}$ &  $\num{1.92e3}$ & $\num{2.45e3}$ & $\num{3.36e3}$ & $\num{1.77e4}$ & $\num{5.57e4}$\tabularnewline 
		$\beta= \num{4.8e1}$& $\num{2.31e3}$ & $\num{2.51e3}$ &  $\num{3.21e3}$ & $\num{4.41e3}$ & $\num{2.37e4}$ & $\num{7.62e4}$\tabularnewline 
		$\beta=\num{6.6e1}$ & $\num{4.38e3}$ &  $\num{4.78e3}$ & $\num{6.12e3}$ & $\num{8.44e3}$ & $\num{4.66e4}$ & $\num{1.56e5}$\tabularnewline 
		\hline
		\hline
	\end{tabular}
	\vspace{.5cm}
	
	\begin{tabular}{c|c|c|c|c|c|c}
		\hline
		\hline
		$\alpha$ & $\num{1e1}$ & $\num{1.6e1}$ & $\num{2.4e1}$ & $\num{4.2e1}$ & $\num{4.8e1}$ & $\num{6.6e1}$\tabularnewline \hline
		Min.\ of~\eqref{eq:TraingDiscrete}& $\num{7.27e2}$ & $\num{4.60e2}$ & $\num{3.38e2}$ & $\num{2.88e2}$ & $\num{2.84e2}$ & $\num{2.79e2}$\tabularnewline \hline
		$\beta=\num{1e-1}$ & $\num{3.39e3}$ &  $\num{2.39e4}$ & $\num{1.63e5}$ & $\num{2.08e6}$ & $\num{3.74e6}$ & $\num{1.47e7}$\tabularnewline 
		$\beta=\num{3.3e-1}$ & $\num{3.03e3}$ &  $\num{2.38e4}$ & $\num{1.66e5}$ & $\num{2.10e6}$ & $\num{3.74e6}$ & $\num{1.48e7}$\tabularnewline 
		$\beta=\num{6.6e-1}$ & $\num{2.69e3}$ &  $\num{2.20e4}$ & $\num{1.59e5}$ & $\num{2.06e6}$ & $\num{3.71e6}$ & $\num{1.46e7}$\tabularnewline 
		$\beta=\num{1e0}$ & $\num{2.47e3}$ &  $\num{2.04e4}$ & $\num{1.52e5}$ & $\num{2.02e6}$ & $\num{3.65e6}$ & $\num{1.45e7}$\tabularnewline 
		$\beta=\num{3.3e0}$ & $\num{1.79e3}$ &  $\num{1.72e4}$ & $\num{1.40e5}$ & $\num{1.95e6}$ & $\num{3.55e6}$ & $\num{1.42e7}$\tabularnewline 
		$\beta=\num{6.6e0}$ & $\num{1.07e3}$ &  $\num{1.22e4}$ & $\num{1.20e5}$ & $\num{1.83e6}$ & $\num{3.36e6}$ & $\num{1.37e7}$\tabularnewline 
		$\beta=\num{1e1}$ & {\color{blue}$\mathbf{\num{9.50e2}}$} &  $\num{6.78e3}$ & $\num{9.27e4}$ & $\num{1.64e6}$ & $\num{3.06e6}$ & $\num{1.29e7}$\tabularnewline 
		$\beta=\num{1.6e1}$ & $\num{4.37e3}$ &  {\color{blue} $\mathbf{\num{7.66e2}}$} & $\num{3.51e4}$ & $\num{1.16e6}$ & $\num{2.30e6}$ & $\num{1.07e7}$\tabularnewline 
		$\beta=\num{2.4e1}$ & $\num{2.01e4}$ &  $\num{1.70e4}$ & {\color{blue} $\mathbf{\num{5.74e2}}$} & $\num{5.65e5}$ & $\num{1.31e6}$ & $\num{7.63e6}$\tabularnewline 
		$\beta=\num{4.2e1}$ & $\num{1.03e5}$ &  $\num{1.73e5}$ & $\num{1.86e5}$ & {\color{blue} $\mathbf{\num{3.31e2}}$} & $\num{8.28e4}$ & $\num{2.50e6}$\tabularnewline 
		$\beta=\num{4.8e1}$ & $\num{1.45e5}$ &  $\num{2.62e5}$ & $\num{3.31e5}$ & $\num{6.36e4}$ & {\color{blue} $\mathbf{\num{3.08e2}}$} & $\num{1.40e6}$\tabularnewline 
		$\beta=\num{6.6e1}$ & $\num{3.16e5}$ &  $\num{6.40e5}$ & $\num{1.01e6}$ & $\num{1.01e6}$ & $\num{7.44e5}$ & {\color{blue} $\mathbf{\num{2.87e2}}$}\tabularnewline 
		\hline
		\hline
	\end{tabular}
	\caption{Values of $(1 + \alpha^2)\widehat{W}_{2,\epsilon}^2(\hat{\boldsymbol \nu}^*_\beta,\alpha^2 \hat{\boldsymbol \mu}_0)$ in terms of the scale $\beta$ of gradient-descent training.
		The diagonal $\alpha = \beta$ is highlighted in bold.}
	\label{tab:interpolation}
\end{table}

\section{Conclusions}\label{sec:Conclusions}
In this paper, we have introduced the scaling path of a neural network.
It involves the Hellinger--Kantorovich distance (a.k.a.\ Wasserstein--Fisher--Rao distance) and depends on an initialization scale.
As main contribution, we have shown that the solutions of these paths depend continuously on the initialization scale, which makes them well-behaved objects amendable to further theoretical analyses.
The relevance of the scaling path is demonstrated by a small-scale numerical example, in which we observed that the scaling path can be indeed qualitatively related to the training dynamics of gradient descent at large times, namely, the endpoint of the optimization path.

\section*{Acknowledgment}
The research leading to these results was supported by the European Research Council (ERC) under European Union’s Horizon 2020 (H2020), Grant Agreement - Project No 101020573 FunLearn.

\bibliographystyle{abbrv}
\bibliography{references}
\end{document}